\newsavebox{\bigimage}
\newcommand{\bx}{\mathbf{x}}
\newcommand{\bA}{\mathbf{A}}
\newcommand{\bI}{\mathbf{I}}
\newcommand{\bc}{\mathbf{c}}
\newcommand{\E}{\mathop{\mathbb{E}}}
\newcommand{\V}{\mathop{\mathbb{V}}}
\newcommand{\btheta}{\boldsymbol{\theta}}
\newcommand{\bphi}{\boldsymbol{\phi}} \newif\ifcomments
  \newcommand{\colornote}[3]{{\color{#1}\bf{#2: #3}\normalfont}}
  \newcommand{\colornote}[3]{}
\newtheorem{proposition}{Proposition}
\newtheorem{lemma}{Lemma}
\title{Lookahead Optimizer: $k$ steps forward, 1 step back}
\author{  Michael R. Zhang, 
  James Lucas,  
  Geoffrey Hinton, 
  Jimmy Ba\\
  Department of Computer Science, University of Toronto, Vector Institute \\
  \texttt{\{michael, jlucas, hinton,jba\}@cs.toronto.edu}
}
\begin{document}

\maketitle
\begin{abstract}
The vast majority of successful deep neural networks are trained using variants of stochastic gradient descent (SGD) algorithms. Recent attempts to improve SGD can be broadly categorized into two approaches: (1) adaptive learning rate schemes, such as AdaGrad and Adam, and (2) accelerated schemes, such as heavy-ball and Nesterov momentum. In this paper, we propose a new optimization algorithm, Lookahead, that is orthogonal to these previous approaches and iteratively updates two sets of weights.
Intuitively, the algorithm chooses a search direction by \emph{looking ahead} at the sequence of ``fast weights" generated by another optimizer.
We show that Lookahead improves the learning stability and lowers the variance of its inner optimizer with negligible computation and memory cost. We empirically demonstrate Lookahead can significantly improve the performance of SGD and Adam, even with their default hyperparameter settings on ImageNet, CIFAR-10/100, neural machine translation, and Penn Treebank.  

\end{abstract} \section{Introduction}
\label{sec:intro}
Despite their simplicity, SGD-like algorithms remain competitive for neural network training against advanced second-order optimization methods.  Large-scale distributed optimization algorithms \citep{goyal2017accurate, you2018imagenet} have shown impressive performance in combination with improved learning rate scheduling schemes \citep{vaswani2017attention,radford2018improving}, yet variants of SGD remain the core algorithm in the distributed systems. The recent improvements to SGD  can be broadly categorized into two approaches:  (1) adaptive learning rate schemes, such as AdaGrad~\citep{duchi2011adaptive} and Adam \citep{kingma2014adam}, and (2) accelerated schemes, such as Polyak heavy-ball~\citep{polyak1964some} and Nesterov momentum~\citep{nesterov1983method}.  Both approaches make use of the accumulated past gradient information to achieve faster convergence.  However, to obtain their improved performance in neural networks often requires costly hyperparameter tuning  \citep{montavon2012neural}. 

In this work, we present Lookahead, a new optimization method, that is orthogonal to these previous approaches.  Lookahead first updates the ``fast weights'' \citep{hinton1987using} $k$ times using any standard optimizer in its inner loop before updating the ``slow weights'' once in the direction of the final fast weights. We show that this update reduces the variance. We find that Lookahead is less sensitive to suboptimal hyperparameters and therefore lessens the need for extensive hyperparameter tuning. By using Lookahead with inner optimizers such as SGD or Adam, we achieve faster convergence across different deep learning tasks with minimal computational overhead.

Empirically, we evaluate Lookahead by  
training classifiers on the CIFAR \citep{krizhevsky2009learning} and ImageNet datasets \citep{deng2009imagenet}, observing faster convergence on the ResNet-50 and ResNet-152 architectures \citep{he2016deep}. We also trained LSTM language models on the Penn Treebank dataset \citep{marcus1993building} and Transformer-based \citep{vaswani2017attention} neural machine translation models on the WMT 2014 English-to-German dataset. For all tasks, using Lookahead leads to improved convergence over the inner optimizer and often improved generalization performance while being robust to hyperparameter changes. Our experiments demonstrate that Lookahead is robust to changes in the inner loop optimizer, the number of fast weight updates, and the slow weights learning rate.

 \section{Method}
\label{sec:method}

\begin{figure}[t]
\centering
\vspace{-0.5cm}
\begin{minipage}{0.45\textwidth}
    \hspace{-0.3in}
    \includegraphics[width=1.\textwidth]{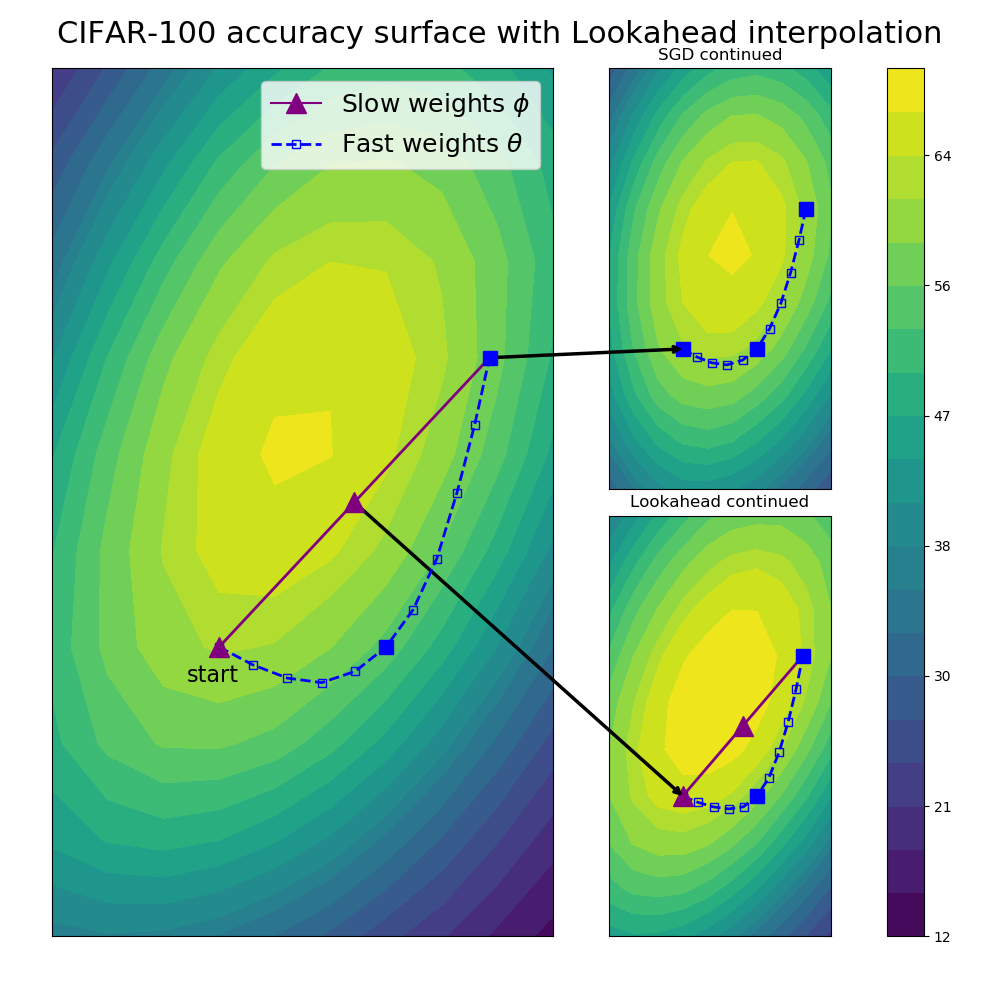}
        \vspace{-0.4cm}
\end{minipage}
\hfill
\hspace{-0.3in}
\begin{minipage}{0.55\textwidth}
\begin{algorithm}[H]
  \caption{Lookahead Optimizer:}
  \label{alg:lookahead}
\begin{algorithmic}
   \REQUIRE Initial parameters $\phi_0$, objective function $L$ 
   \REQUIRE Synchronization period $k$, slow weights step size $\alpha$, optimizer $A$
   \FOR{$t=1, 2, \dots$}
     \STATE Synchronize parameters $\theta_{t,0} \gets \phi_{t-1}$
     \FOR{$i=1, 2, \dots, k$}
        \STATE sample minibatch of data $d \sim \mathcal{D}$
        \STATE $\theta_{t,i} \gets \theta_{t,i-1} + A(L, \theta_{t,i-1}, d)$
     \ENDFOR
     \STATE Perform outer update $\phi_t \gets \phi_{t-1} + \alpha (\theta_{t,k} - \phi_{t-1})$
   \ENDFOR 
   \STATE \textbf{return} parameters $\phi$
\end{algorithmic}
\end{algorithm}
\end{minipage}
\vspace{0.1in}

\caption{({{Left}}) Visualizing Lookahead ($k=10$) through a ResNet-32 test accuracy surface at epoch 100 on CIFAR-100. We project the weights onto a plane defined by the first, middle, and last fast (inner-loop) weights. The fast weights are along the blue dashed path. All points that lie on the plane are represented as solid, including the entire Lookahead slow weights path (in purple). Lookahead (middle, bottom right) quickly progresses closer to the minima than SGD (middle, top right) is able to. ({{Right}}) Pseudocode for Lookahead. \label{fig:cifar_100_loss_viz}}
\end{figure}

In this section, we describe the Lookahead algorithm and discuss its properties. Lookahead maintains a set of slow weights $\phi$ and fast weights $\theta$, which get synced with the fast weights every $k$ updates. The fast weights are updated through applying $A$, any standard optimization algorithm, to batches of training examples sampled from the dataset $\mathcal{D}$. After $k$ inner optimizer updates using $A$, the slow weights are updated towards the fast weights by linearly interpolating in weight space, $\theta - \phi$. We denote the slow weights learning rate as $\alpha$. After each slow weights update, the fast weights are reset to the current slow weights value. Psuedocode is provided in Algorithm $\ref{alg:lookahead}$.\footnote{Our open source implementation is available at \url{https://github.com/michaelrzhang/lookahead}.}

Standard optimization methods typically require carefully tuned learning rates to prevent oscillation and slow convergence. This is even more important in the stochastic setting \citep{martens2014new, wu2018understanding}. Lookahead, however, benefits from a larger learning rate in the inner loop. When oscillating in the high curvature directions, the fast weights updates make rapid progress along the low curvature directions. The slow weights help smooth out the oscillations through the parameter interpolation. The combination of fast weights and slow weights improves learning in high curvature directions, reduces variance, and enables Lookahead to converge rapidly in practice.  

Figure~\ref{fig:cifar_100_loss_viz} shows the trajectory of both the fast weights and slow weights during the optimization of a ResNet-32 model on CIFAR-100. While the fast weights explore around the minima, the slow weight update pushes Lookahead aggressively towards an area of improved test accuracy, a region which remains unexplored by SGD after 20 updates. 

\paragraph{Slow weights trajectory} We can characterize the trajectory of the slow weights as an exponential moving average (EMA) of the \emph{final} fast weights within each inner-loop, regardless of the inner optimizer. {After $k$ inner-loop steps we have: \begin{align}
    \phi_{t+1} &= \phi_{t} + \alpha (\theta_{t,k} - \phi_{t}) \\
     &= \alpha [\theta_{t,k} + (1-\alpha)\theta_{t-1,k} + \ldots + (1-\alpha)^{t-1} \theta_{0,k}] + (1-\alpha)^t \phi_0
\end{align}}
Intuitively, the slow weights heavily utilize recent proposals from the fast weight optimization but maintain some influence from previous fast weights. We show that this has the effect of reducing variance in Section \ref{sec: noisyquadratic}. While a Polyak-style average has further theoretical guarantees, our results match the claim that ``an exponentially-decayed moving average typically works much better in practice" \citep{martens2014new}.

\paragraph{Fast weights trajectory} Within each inner-loop, the trajectory of the fast weights depends on the choice of underlying optimizer. Given an optimization algorithm $A$ that takes in an objective function $L$ and the current mini-batch training examples $d$, we have the update rule for the fast weights:
\begin{equation}
\theta_{t, i+1} = \theta_{t, i} + A(L, \theta_{t, i-1},d).
\end{equation}
We have the choice of maintaining, interpolating, or resetting the internal state (e.g. momentum) of the inner optimizer. We evaluate this tradeoff on the CIFAR dataset (where every choice improves convergence) in Appendix \ref{app:interpolation} and maintain internal state for the other experiments.

\paragraph{Computational complexity} Lookahead has a constant computational overhead due to parameter copying and basic arithmetic operations that is amortized across the $k$ inner loop updates. The number of operations is $\mathcal{O}(\frac{k+1}{k})$ times that of the inner optimizer. Lookahead maintains a single additional copy of the number of learnable parameters in the model.

\begin{figure}
    \centering
    \begin{minipage}{0.49 \linewidth}
    \includegraphics[width=\linewidth]{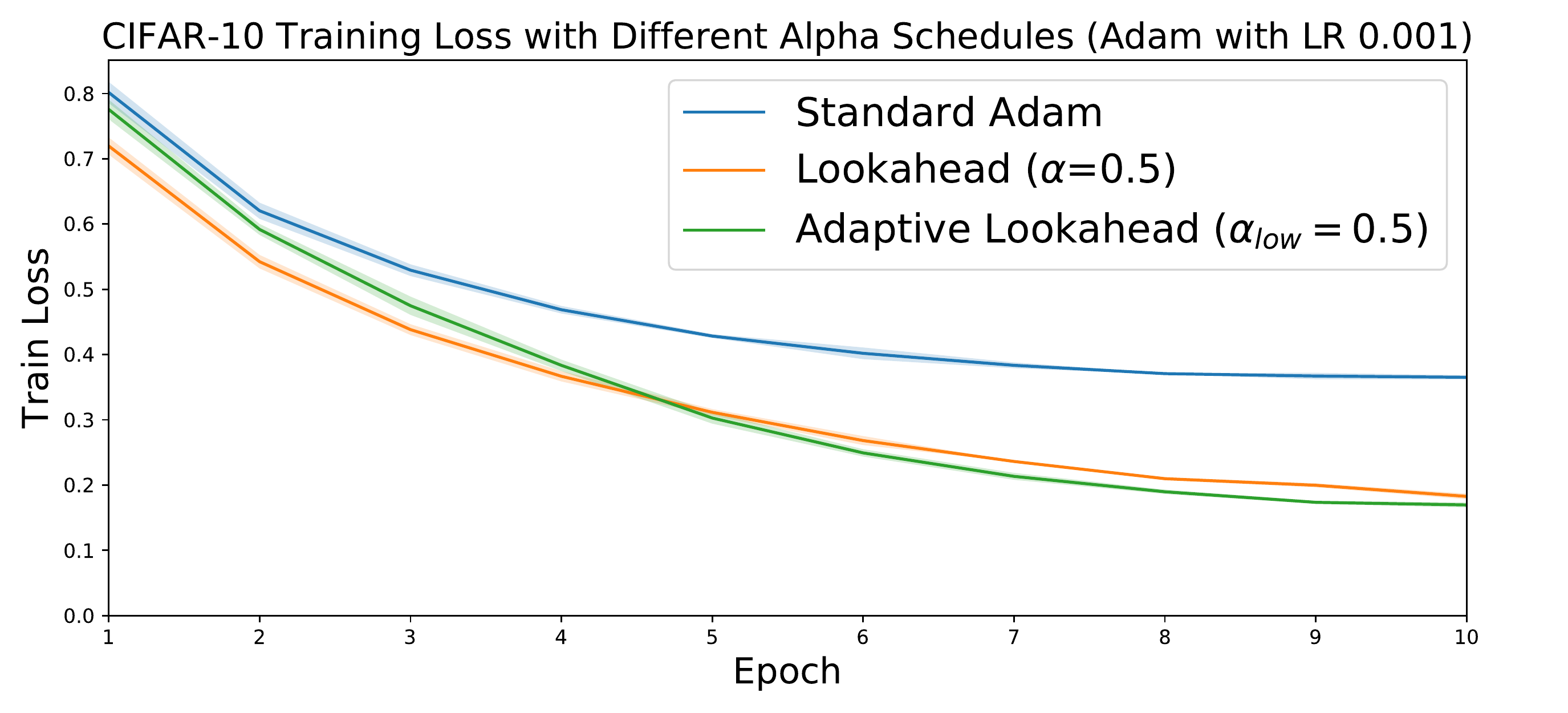}
    \end{minipage} \hfill
    \begin{minipage}{0.49 \linewidth}
    \includegraphics[width=\linewidth]{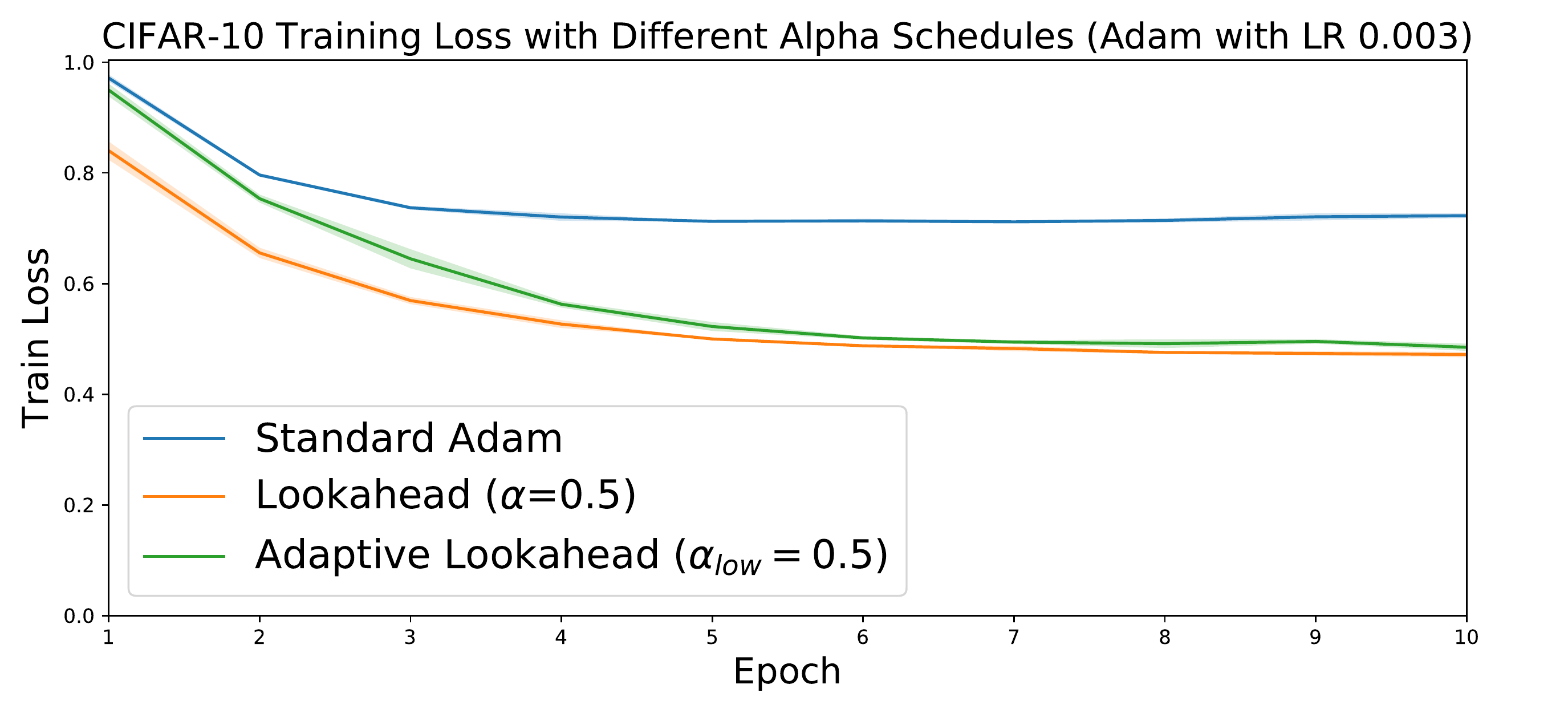}
    \end{minipage}
    \caption{CIFAR-10 training loss with fixed and adaptive $\alpha$. The adaptive $\alpha$ is clipped between $[\alpha_{low}, 1]$. (Left) Adam learning rate = 0.001.  (Right) Adam learning rate = 0.003.} 
    \label{fig:adaptivealpha}
\end{figure}

\subsection{Selecting the Slow Weights Step Size}

The step size in the direction ($\theta_{t,k} - \theta_{t,0}$) is controlled by $\alpha$. By taking a quadratic approximation of the loss, we present a principled way of selecting $\alpha$. 

\begin{proposition}[Optimal slow weights step size]
\label{prop:optimal-step-size}
For a quadratic loss function $L(x) = \frac{1}{2} x^T A x - b^T x$, 
the step size $\alpha^*$ that minimizes the loss for two points $\theta_{t,0}$ and $\theta_{t,k}$ is given by:
\[ \alpha^* = \arg \min_{\alpha} L(\theta_{t,0} + \alpha (\theta_{t,k} - \theta_{t,0})) = \frac{(\theta_{t,0} - \theta^*)^T A(\theta_{t,0} - \theta_{t, k})}{(\theta_{t,0}  - \theta_{t,k})^T A (\theta_{t,0} - \theta_{t,k})}  \]
where $\theta^* = A^{-1}b$ minimizes the loss.
\end{proposition}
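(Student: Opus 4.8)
The plan is to recognize this as a one-dimensional exact line search and reduce it to minimizing a scalar quadratic in $\alpha$. First I would introduce the shorthand $d = \theta_{t,k} - \theta_{t,0}$ for the (fixed) search direction and $x_0 = \theta_{t,0}$ for the base point, so that the objective becomes the scalar function $g(\alpha) = L(x_0 + \alpha d)$. Substituting into the quadratic form and expanding gives
\[
g(\alpha) = \tfrac{1}{2}x_0^T A x_0 + \alpha\, x_0^T A d + \tfrac{1}{2}\alpha^2 d^T A d - b^T x_0 - \alpha\, b^T d,
\]
using the symmetry of $A$ to combine the two cross terms $x_0^T A d$ and $d^T A x_0$. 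This is simply a parabola in $\alpha$ whose leading coefficient $\tfrac{1}{2}d^T A d$ is positive whenever $A$ is positive definite and $d \neq 0$.

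Next I would differentiate with respect to $\alpha$ and set the derivative to zero. Since $g'(\alpha) = x_0^T A d + \alpha\, d^T A d - b^T d$ is affine in $\alpha$, the stationarity condition yields directly
\[
\alpha^* = \frac{b^T d - x_0^T A d}{d^T A d} = \frac{(b - A x_0)^T d}{d^T A d}.
\]
Because $g''(\alpha) = d^T A d > 0$, this critical point is the unique global minimizer, which disposes of the second-order condition with one line.

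The only step requiring a little care is matching the claimed closed form, which is the bookkeeping around the substitution $b = A\theta^*$ (equivalently $\theta^* = A^{-1}b$) and the sign of the direction vector. I would write $b - A x_0 = A\theta^* - A x_0 = -A(x_0 - \theta^*)$, so the numerator becomes $-(x_0 - \theta^*)^T A d = (\theta_{t,0} - \theta^*)^T A(\theta_{t,0} - \theta_{t,k})$ after flipping the sign into $d = -(\theta_{t,0} - \theta_{t,k})$; the denominator $d^T A d$ is invariant under this sign flip, giving exactly $(\theta_{t,0} - \theta_{t,k})^T A(\theta_{t,0} - \theta_{t,k})$. This reproduces the stated expression.

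I do not expect any genuine obstacle here: the result is an elementary exact-line-search computation, and the entire content is the algebra plus the substitution $b = A\theta^*$. The only things worth stating explicitly for rigor are that $A$ must be positive definite (so the denominator is strictly positive and the stationary point is a minimizer rather than a maximizer or saddle) and that $\theta_{t,0} \neq \theta_{t,k}$ so the direction $d$ is nonzero and $\alpha^*$ is well defined.
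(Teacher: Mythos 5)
Your proof is correct and follows essentially the same route as the paper's: differentiate the scalar quadratic in $\alpha$, set the derivative to zero, and substitute $b = A\theta^*$ to obtain the stated closed form. The only addition is your explicit check of the second-order condition and the nondegeneracy assumptions ($A$ positive definite, $\theta_{t,0} \neq \theta_{t,k}$), which the paper leaves implicit.
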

Proof is in the appendix. Using quadratic approximations for the curvature, which is typical in second order optimization \citep{duchi2011adaptive, kingma2014adam, martens2015optimizing}, we can derive an estimate for the optimal $\alpha$ more generally. The full Hessian is typically intractable so we instead use aforementioned approximations, such as the diagonal approximation to the empirical Fisher used by the Adam optimizer \citep{kingma2014adam}. This approximation works well in our numerical experiments if we clip the magnitude of the step size. At each slow weight update, we compute:

\[ \hat{\alpha}^* = \text{clip}( \frac{(\theta_{t,0} - (\theta_{t,k}  - \hat{A}^{-1} \hat{\nabla} L(\theta_{t,k}) )^T \hat{A} (\theta_{t,0}  - \theta_{t,k})}{(\theta_{t,0}  - \theta_{t,k})^T \hat{A} (\theta_{t,0} - \theta_{t,k})},  \alpha_{\text{low}}, 1)\]

 where $\hat{A}$ is the empirical Fisher approximation and $\theta_{t,k}  - \hat{A}^{-1} \hat{\nabla} L(\theta_{t,k})$ approximates the optimum $\theta^*$. We prove Proposition \ref{prop:optimal-step-size} and elaborate on assumptions in the appendix \ref{app:optimal-slow}. Setting $\alpha_{\text{low}} > 0$ improves the stability of our algorithm. We evaluate the performance of this adaptive scheme versus a fixed scheme and standard Adam on a ResNet-18 trained on CIFAR-10 with two different learning rates and show the results in Figure \ref{fig:adaptivealpha}. Additional hyperparameter details are given in appendix \ref{app:experiments}. Both the fixed and adaptive Lookahead offer improved convergence.

In practice, a fixed choice of $\alpha$ offers similar convergence benefits and tends to generalize better. Fixing $\alpha$ avoids the need to maintain an estimate of the empirical Fisher, which incurs a memory and computational cost when the inner optimizer does not maintain such an estimate e.g. SGD. We thus use a fixed $\alpha$ for the rest of our deep learning experiments. 

 \section{Convergence Analysis}

\subsection{Noisy quadratic analysis}
\label{sec: noisyquadratic}

\begin{wrapfigure}{t}{0.5 \textwidth}
    \centering
    \vspace{-0.4cm}    \includegraphics[width=0.9\linewidth]{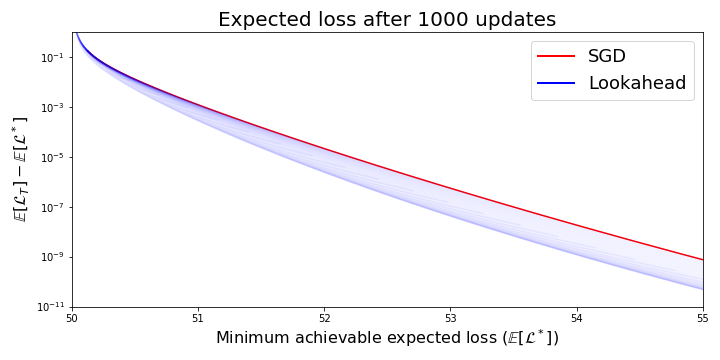}    \vspace{-0.1cm}    \caption{Comparing expected optimization progress between SGD and Lookahead($k=5$) on the noisy quadratic model.  Each vertical slice compares the convergence of optimizers with the same final loss values. For Lookahead, convergence rates for 100 evenly spaced $\alpha$ values in the range $(0,1]$ are overlaid.}
    \label{fig:noisy_quad_convergence}
    \vspace{-0.4cm}\end{wrapfigure}

We analyze Lookahead on a noisy quadratic model to better understand its convergence guarantees. While simple, this model is a proxy for neural network optimization and effectively optimizing it remains a challenging open problem \citep{schaul2013no, martens2015optimizing, wu2018understanding, zhang2019algorithmic}. In this section, we will show under equal learning rates that Lookahead will converge to a smaller steady-state risk than SGD. We will then show through simulation of the expected dynamics that Lookahead is able to converge to this steady-state risk more quickly than SGD for a range of hyperparameter settings.

\paragraph{Model definition} We use the same model as in \citet{schaul2013no} and \citet{wu2018understanding}. 
\begin{equation}
    \hat{\mathcal{L}}(\bx) = \frac{1}{2} (\bx - \bc)^T \bA (\bx - \bc),
\end{equation}
with $\bc \sim \mathcal{N}(\bx^*, \Sigma)$. We assume that both $\bA$ and $\Sigma$ are diagonal and that, without loss of generality, $\bx^* = \mathbf{0}$. While it is trivial to assume that $\bA$ is diagonal \footnote{Classical momentum's iterates are invariant to translations and rotations (see e.g. \citet{sutskever2013importance})  and Lookahead's linear interpolation is also invariant to such changes.} the co-diagonalizable noise assumption is non-trivial but is common --- see \citet{wu2018understanding} and \citet{zhang2019algorithmic} for further discussion. We use $a_i$ and $\sigma_i^2$ to denote the diagonal elements of $\bA$ and $\Sigma$ respectively. Taking the expectation over $\bc$, the expected loss of the iterates $\theta^{(t)}$ is,

\begin{equation}\label{eqn:noisy_exp_loss}
    \mathcal{L}(\theta^{(t)}) = \E[\hat{\mathcal{L}}(\theta^{(t)})]
    = \frac{1}{2} \E [\sum_i a_i ({\theta_i^{(t)}}^2 + \sigma_i^2)]
    = \frac{1}{2}\sum_i a_i (\E[\theta_i^{(t)}]^2 + \V[\theta_i^{(t)}] + \sigma_i^2).
\end{equation}

Analyzing the expected dynamics of the SGD iterates and the slow weights gives the following result.

\begin{proposition}[Lookahead steady-state risk]\label{prop:noisy_quad}
Let $0 < \gamma < 2/L$ be the learning rate of SGD and Lookahead where $L=\max_i a_i$. In the noisy quadratic model, the iterates of SGD and Lookahead with SGD as its inner optimizer converge to $0$ in expectation and the variances converge to the following fixed points:
\begin{align}
    V_{SGD}^* &= \frac{\gamma^2 \bA^2 \Sigma^2}{\bI - (\bI - \gamma \bA)^2} \\
    V_{LA}^* &= \frac{\alpha^2 (\bI - (\bI - \gamma\bA)^{2k})}{\alpha^2 (\bI - (\bI - \gamma\bA)^{2k}) + 2\alpha(1 - \alpha)(\bI - (\bI - \gamma\bA)^k)} V_{SGD}^*
        \end{align}
\end{proposition}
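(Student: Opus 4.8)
The plan is to exploit the diagonal structure to reduce everything to a collection of independent one-dimensional recurrences, then track the first two moments of each. Since $\mathbf{A}$ and $\Sigma$ are diagonal and the stochastic gradient on coordinate $i$ is $a_i(\theta_i - c_i)$ with $c_i \sim \mathcal{N}(0,\sigma_i^2)$ drawn fresh each minibatch, the coordinates decouple completely. I would fix a coordinate, drop the index $i$, and set $\rho := 1 - \gamma a$. For plain SGD the update becomes the affine recurrence $\theta^{(t+1)} = \rho\,\theta^{(t)} + \gamma a\, c^{(t+1)}$. Taking expectations gives $\E[\theta^{(t+1)}] = \rho\,\E[\theta^{(t)}]$, which vanishes because $0 < \gamma < 2/L$ forces $|\rho|<1$ on every coordinate; and since the fresh noise is independent of $\theta^{(t)}$, the variance obeys $\V[\theta^{(t+1)}] = \rho^2\,\V[\theta^{(t)}] + \gamma^2 a^2 \sigma^2$, whose fixed point is $V^*_{SGD} = \gamma^2 a^2 \sigma^2/(1-\rho^2)$, i.e. the stated $\gamma^2\mathbf{A}^2\Sigma^2/(\mathbf{I}-(\mathbf{I}-\gamma\mathbf{A})^2)$ after restacking the coordinates.

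For Lookahead the key step is to unroll the $k$ inner SGD steps starting from $\theta_{t,0} = \phi_t$. Iterating the affine recurrence yields $\theta_{t,k} = \rho^k \phi_t + \gamma a \sum_{j=1}^{k} \rho^{k-j} c_j$, which I would write compactly as $\theta_{t,k} = \rho^k \phi_t + N_t$, where the noise term $N_t$ is independent of $\phi_t$ (all the $c_j$ are drawn after $\phi_t$ is fixed) and has variance $\V[N_t] = \gamma^2 a^2 \sigma^2 \sum_{m=0}^{k-1}\rho^{2m} = (1-\rho^{2k})\,V^*_{SGD}$. Substituting into the slow-weight update $\phi_{t+1} = (1-\alpha)\phi_t + \alpha\,\theta_{t,k}$ collapses it to a single affine recurrence $\phi_{t+1} = \beta\,\phi_t + \alpha N_t$ with contraction factor $\beta := (1-\alpha) + \alpha\rho^k = 1 - \alpha(1-\rho^k)$.

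From here the argument mirrors the SGD case. The mean satisfies $\E[\phi_{t+1}] = \beta\,\E[\phi_t]\to 0$, since $0 < \alpha \le 1$ and $|\rho^k|<1$ give $|\beta|<1$, and the variance obeys $\V[\phi_{t+1}] = \beta^2\,\V[\phi_t] + \alpha^2\,\V[N_t]$ by independence, so its fixed point is $V^*_{LA} = \alpha^2(1-\rho^{2k})\,V^*_{SGD}/(1-\beta^2)$. The last step is the algebraic simplification of the denominator: writing $1-\beta^2 = (1-\beta)(1+\beta)$ with $1-\beta = \alpha(1-\rho^k)$, and factoring $1-\rho^{2k} = (1-\rho^k)(1+\rho^k)$, I would verify $1-\beta^2 = \alpha(1-\rho^k)\,[\,2-\alpha(1-\rho^k)\,] = \alpha^2(1-\rho^{2k}) + 2\alpha(1-\alpha)(1-\rho^k)$, which is exactly the claimed denominator. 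Restacking the coordinates then recovers the matrix identity.

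I expect the main obstacle to be bookkeeping rather than any deep idea: one must be careful that the inner-loop noise $N_t$ is genuinely independent of the current slow weights $\phi_t$ (so the variance splits additively with no cross term), and that the geometric sum over the $k$ unrolled steps is indexed correctly to produce the factor $1-\rho^{2k}$. The closing factorization is routine once the recurrences are set up, but it is where the precise form of the stated denominator is pinned down, so it is worth carrying out explicitly.
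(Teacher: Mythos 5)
Your proposal is correct and follows essentially the same route as the paper: decouple coordinates using the diagonal structure, derive the mean and variance recursions for the slow weights, and solve for the fixed points, including the same closing factorization of $1-\beta^2 = \alpha^2(1-\rho^{2k}) + 2\alpha(1-\alpha)(1-\rho^k)$. The only (cosmetic) difference is that you obtain the slow-weight variance recursion by unrolling the inner loop into $\theta_{t,k}=\rho^k\phi_t+N_t$ with $N_t$ independent of $\phi_t$, which packages the cross term automatically, whereas the paper computes $\mathrm{cov}(\phi_t,\theta_{t,k})=(\bI-\gamma\bA)^k\V[\phi_t]$ recursively and assembles the same recursion from the three-term variance expansion.
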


\paragraph{Remarks}For the Lookahead variance fixed point, the first product term is always smaller than 1 for $\alpha \in (0,1)$, and thus Lookahead has a variance fixed point that is strictly smaller than that of the SGD inner-loop optimizer for the same learning rate. Evidence of this phenomenon is present in deep neural networks trained on the CIFAR dataset, shown in Figure~\ref{fig:cifar-every-batch}.

In Proposition~\ref{prop:noisy_quad}, we use the same learning rate for both SGD and Lookahead. To fairly evaluate the convergence of the two methods, we compare the convergence rates under hyperparameter settings that achieve the same steady-state risk. In Figure~\ref{fig:noisy_quad_convergence} we show the expected loss after 1000 updates (computed analytically) for both Lookahead and SGD. This shows that there exists (fixed) settings of the Lookahead hyperparameters that arrive at the same steady state risk as SGD but do so more quickly. Moreover, Lookahead outperforms SGD across the broad spectrum of $\alpha$ values we simulated. Details, further simulation results, and additional discussion are presented in Appendix~\ref{app:quadratic}.

 \subsection{Deterministic quadratic convergence}

In the previous section we showed that on the noisy quadratic model, Lookahead is able to improve convergence of the SGD optimizer under setting with equivalent convergent risk. Here we analyze the quadratic model without noise using gradient descent with momentum \citep{polyak1964some, goh2017why} and show that when the system is under-damped, Lookahead is able to improve on the convergence rate.

As before, we restrict our attention to diagonal quadratic functions (which in this case is entirely without loss of generality). Given an initial point $\btheta_0$, we wish to find the rate of contraction, that is, the smallest $\rho$ satisfying $||\btheta_t - \btheta^*|| \leq \rho^t ||\btheta_{0} - \btheta^*||$. We follow the approach of \citep{o2015adaptive} and model the optimization of this function as a linear dynamical system allowing us to compute the rate exactly. Details are in Appendix \ref{app:quadratic}.

\begin{wrapfigure}{r}{0.6 \textwidth}
    \centering
    \vspace{-0.2cm}
    \includegraphics[width=0.9 \linewidth]{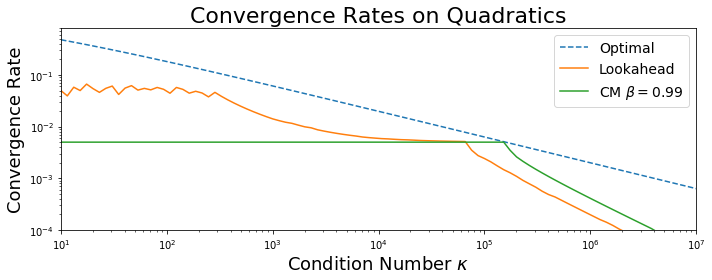}
    \vspace{-0.2cm}
    \caption{Quadratic convergence rates ($1-\rho$) of classical momentum versus Lookahead wrapping classical momentum. For Lookahead, we fix $k=20$ lookahead steps and $\alpha=0.5$ for the slow weights step size. Lookahead is able to significantly improve on the convergence rate in the under-damped regime where oscillations are observed.}
    \label{fig:quad_convergence}
\end{wrapfigure}

As in \citet{lucas2018aggregated}, to better understand the sensitivity of Lookahead to misspecified conditioning we fix the momentum coefficient of classical momentum and explore the convergence rate over varying condition number under the optimal learning rate.
As expected, Lookahead has slightly worse convergence in the over-damped regime where momentum is set too low and CM is slowly, monotonically converging to the optimum. However, when the system is under-damped (and oscillations occur) Lookahead is able to significantly improve the convergence rate by skipping to a better parameter setting during oscillation.

  \section{Related work}
\label{sec:related}
Our work is inspired by recent advances in understanding the loss surface of deep neural networks. While the idea of following the trajectory of weights dates back to \citet{ruppert1988efficient, polyak1992acceleration}, averaging weights in neural networks has not been carefully studied until more recently. \citet{garipov2018loss} observe that the final weights of two independently trained neural networks can be connected by a curve with low loss. \citet{izmailov2018averaging} proposes Stochastic Weight Averaging (SWA), which averages the weights at different checkpoints obtained during training. Parameter averaging schemes are used to create ensembles in natural language processing tasks \citep{jean2014using, merity2017regularizing} and in training Generative Adversarial Networks \citep{yasin2018averaging}. In contrast to previous approaches, which generally focus on generating a set of parameters at the \emph{end} of training, Lookahead is an optimization algorithm which performs parameter averaging \emph{during} the training procedure to achieve faster convergence. We elaborate on differences with SWA and present additional experimental results in appendix \ref{app:swa-comparison}.

The Reptile algorithm, proposed by \citet{nichol2018reptile}, samples tasks in its outer loop and runs an optimization algorithm on each task within the inner loop. The initial weights are then updated in the direction of the new weights. While the functionality is similar, the application and setting are starkly different. Reptile samples different tasks and aims to find parameters which act as good initial values for new tasks sampled at test time. Lookahead does not sample new tasks for each outer loop and aims to take advantage of the geometry of loss surfaces to improve convergence. 

Katyusha \citep{allen2017katyusha}, an accelerated form of SVRG \citep{johnson2013accelerating}, also uses an outer and inner loop during optimization. Katyusha checkpoints parameters during optimization. Within each inner loop step, the parameters are pulled back towards the latest checkpoint. Lookahead computes the pullback only at the end of the inner loop and the gradient updates do not utilize the SVRG correction (though this would be possible). While Katyusha has theoretical guarantees in the convex optimization setting, the SVRG-based update does not work well for neural networks \citep{1812.04529}. 

Anderson acceleration \citep{anderson1965iterative} and other related extrapolation techniques \citep{brezinski2013extrapolation} have a similar flavor to Lookahead. These methods keep track of all iterates within an inner loop and then compute some linear combination which extrapolates the iterates towards their fixed point. This presents additional challenges first in the form of additional memory overhead as the number of inner-loop steps increases and also in finding the best linear combination. \citet{scieur2018nonlinear, scieur2018nonlinear2} propose a  method by which to find a good linear combination and apply this approach to deep learning problems and report both improved convergence and generalization. However, their method requires on the order of $k$ times more memory than Lookahead. Lookahead can be seen as a simple version of Anderson acceleration wherein only the first and last iterates are used.

 \section{Experiments}
\label{sec:experiments}

We completed a thorough evaluation of the Lookahead optimizer on a variety of deep learning tasks against well-calibrated baselines. We explored image classification on CIFAR-10/CIFAR-100 \citep{krizhevsky2009learning} and ImageNet \citep{deng2009imagenet}. We also trained LSTM language models on the Penn Treebank dataset \citep{marcus1993building} and Transformer-based \citep{vaswani2017attention} neural machine translation models on the WMT 2014 English-to-German dataset. For all of our experiments, every algorithm consumed the same amount of training data.

\subsection{CIFAR-10 and CIFAR-100} 
\label{sec:cifar}

The CIFAR-10 and CIFAR-100 datasets for classification consist of $32 \times 32$ color images, with 10 and 100 different classes, split into a training set with 50,000 images and a test set with 10,000 images. We ran all our CIFAR experiments with 3 seeds and trained for 200 epochs on a ResNet-18 \citep{he2016deep} with batches of 128 images and decay the learning rate by a factor of $5$ at the 60th, 120th, and 160th epochs.  Additional details are given in appendix \ref{app:experiments}.

We summarize our results in Figure \ref{tabel:cifar-optimizer-compare}.\footnote{We refer to SGD with heavy ball momentum \citep{polyak1964some} as SGD.} We also elaborate on how Lookahead contrasts with SWA and present results demonstrating lower validation error with Pre-ResNet-110 and Wide-ResNet-28-10 \cite{Zagoruyko2016WRN} on CIFAR-100 in appendix \ref{app:swa-comparison}.  Note that Lookahead achieves significantly faster convergence throughout training even though the learning rate schedule is optimized for the inner optimizer---future work can involve building a learning rate schedule for Lookahead. This improved convergence is important for better anytime performance in new datasets where hyperparameters and learning rate schedules are not well-calibrated.

\begin{figure}[t]
\centering
\begin{minipage}[t]{0.48 \linewidth}
    \centering
    \includegraphics[width=1. \textwidth]{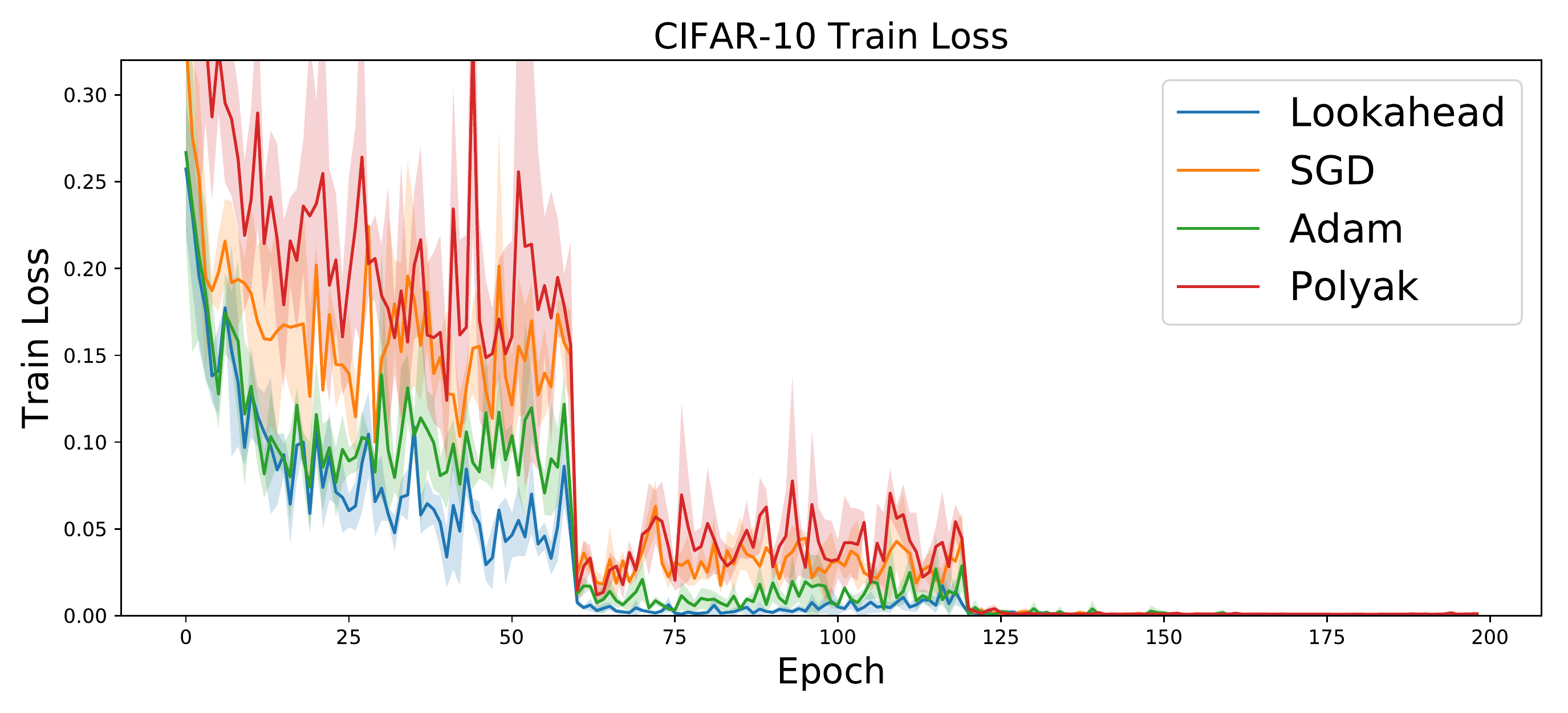}
\end{minipage}
\hfill
\begin{minipage}[t]{0.48 \linewidth}
\begin{table}[H]
\begin{center}
\vspace{-1.4in}
\begin{sc}
\begin{tabular}{l c c c c }
\toprule
Optimizer & CIFAR-10 & CIFAR-100  \\
\midrule
SGD & $95.23 \pm .19$ & $78.24 \pm .18$ \\ 
Polyak & $95.26 \pm .04$ & $77.99 \pm .42$ \\
Adam & $94.84 \pm .16$ & $76.88 \pm .39$ \\ 
Lookahead & $95.27 \pm .06$ & $78.34 \pm .05$ \\
\bottomrule
\end{tabular}
\end{sc}
\end{center}
\caption{CIFAR Final Validation Accuracy.}
\end{table}
\end{minipage}
\caption{Performance comparison of the different optimization algorithms. ({\bf{Left}}) Train Loss on CIFAR-100. ({\bf{Right}}) CIFAR ResNet-18 validation accuracies with various optimizers. We do a grid search over learning rate and weight decay on the other optimizers (details in appendix \ref{app:experiments}). Lookahead and Polyak are wrapped around SGD.}
\label{tabel:cifar-optimizer-compare}
\end{figure}

 \subsection{ImageNet} 

The 1000-way ImageNet task \citep{deng2009imagenet} is a classification task that contains roughly 1.28 million training images and 50,000 validation images. We use the official PyTorch implementation\footnote{Implementation available at \url{https://github.com/pytorch/examples/tree/master/imagenet}.} and the ResNet-50 and ResNet-152  \citep{he2016deep} architectures. Our baseline algorithm is SGD with an initial learning rate of 0.1 and momentum value of 0.9. We train for 90 epochs and decay our learning rate by a factor of 10 at the 30th and 60th epochs. For Lookahead, we set $k=5$ and slow weights step size $\alpha = 0.5$.

Motivated by the improved convergence we observed in our initial experiment, we tried a more aggressive learning rate decay schedule where we decay the learning rate by a factor of 10 at the 30th, 48th, and 58th epochs. Using such a schedule, we reach $75\%$ single crop top-1 accuracy on ImageNet in just 50 epochs and reach $75.5\%$ top-1 accuracy in 60 epochs. The results are shown in Figure \ref{fig:imagenet-visual}.

\begin{figure}[t]
    \centering
    \begin{minipage}{0.48 \linewidth}
    \includegraphics[width=\linewidth]{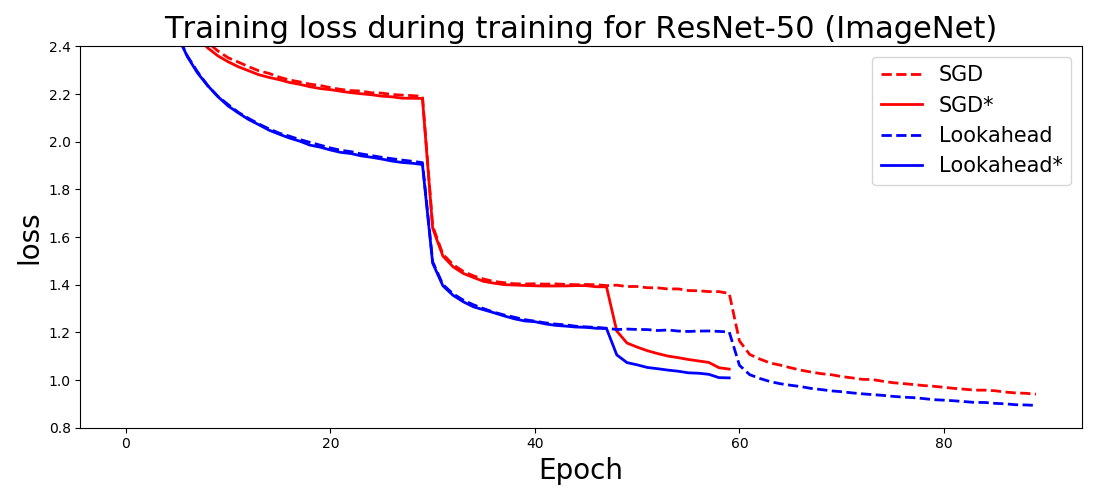}
    \end{minipage} \hfill
    \begin{minipage}{0.48 \linewidth}
\begin{table}[H]
\label{tabel:imagenet-compare}
\begin{center}
\begin{small}
\begin{sc}
\begin{tabular}{l c c }
\toprule
Optimizer & LA & SGD   \\
\midrule
Epoch 50 - Top 1 & 75.13 & 74.43  \\
Epoch 50 - Top 5 & 92.22 & 92.15  \\ 
Epoch 60 - Top 1 & 75.49 & 75.15  \\
Epoch 60 - Top 5 & 92.53 & 92.56  \\ 
\bottomrule
\end{tabular}
\end{sc}
\end{small}
\end{center}
\caption{Top-1 and Top-5 single crop validation accuracies on ImageNet.}
\vskip -0.1in
\end{table}
\end{minipage}
\caption{ImageNet training loss. The asterisk denotes the aggressive learning rate decay schedule, where LR is decayed at iteration 30, 48, and 58. We report validation accuracies for this schedule.}
\vskip -0.1in
\label{fig:imagenet-visual}
    \end{figure}

To test the scalability of our method, we ran Lookahead with the aggressive learning rate decay on ResNet-152. We reach $77\%$ single crop top-1 accuracy in 49 epochs (matching what is reported in \citet{he2016deep}) and $77.96\%$ top-1 accuracy in 60 epochs. Other approaches for improving convergence on ImageNet can require hundreds of GPUs, or tricks such as ramping up the learning rate and adaptive batch-sizes \citep{goyal2017accurate, jia2018highly}. The fastest convergence we are aware of uses an approximate second-order method to train a ResNet-50 to $75\%$ top-1 accuracy in 35 epochs with 1,024 GPUs \citep{1811.12019}. In contrast, Lookahead requires changing one single line of code and can easily scale to ResNet-152.  \subsection{Language modeling}
\label{sec:lm}

We trained LSTMs \citep{hochreiter1997long} for language modeling on the Penn Treebank dataset. We followed the model setup of \citet{merity2017regularizing} and made use of their publicly available code in our experiments. We did not include the fine-tuning stages. We searched over hyperparameters for both Adam and SGD (without momentum) to find the model which gave the best validation performance. We then performed an additional small grid search on each of these methods with Lookahead. Each model was trained for 750 epochs. We show training curves for each model in Figure~\ref{fig:lstm-train}.

\begin{figure}[t]
\begin{subfigure}{0.45 \textwidth}
\centering
    \vskip -0.1in
    \includegraphics[width=1. \textwidth]{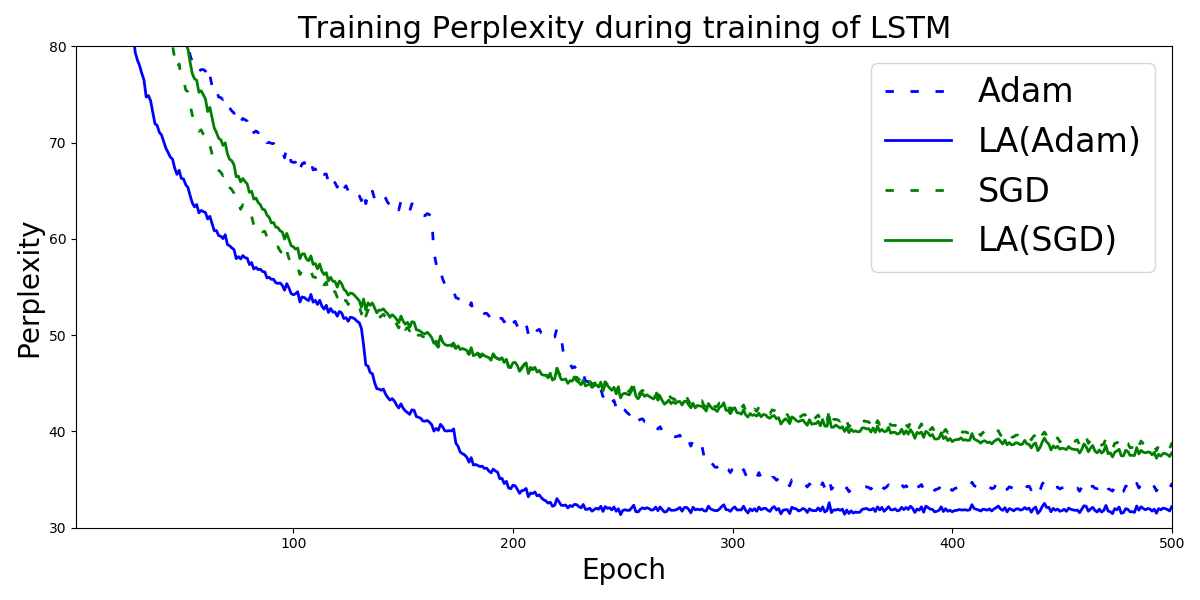}
    \caption{Training perplexity of LSTM models trained on the Penn Treebank dataset}
    \label{fig:lstm-train}
\end{subfigure}
\hfill
\hspace{0.05in}
\begin{subfigure}{0.45 \textwidth}
    \centering
    \includegraphics[width=1. \textwidth]{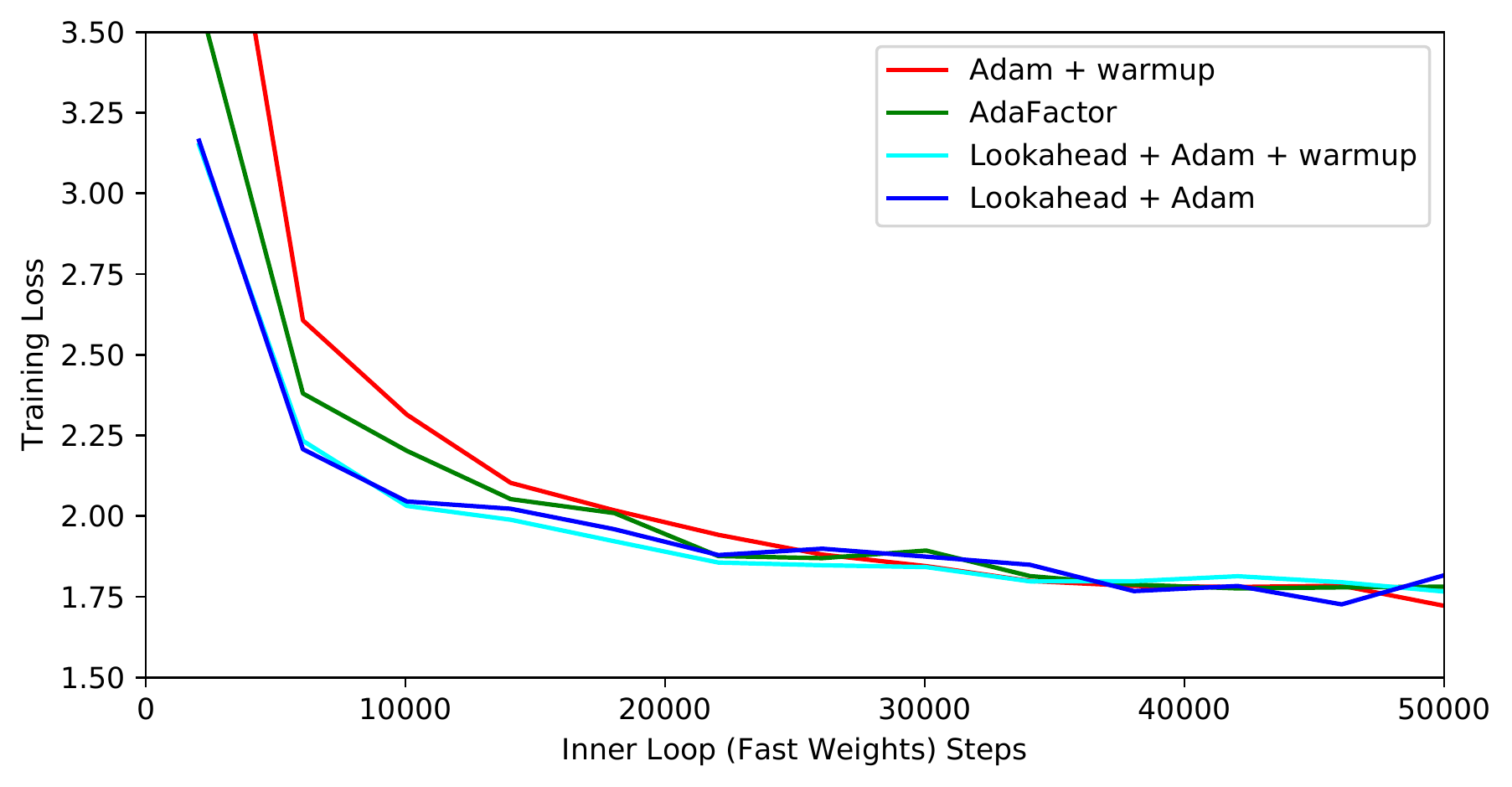}
    \caption{Training Loss on Transformer. Adam and AdaFactor both use a linear warmup scheme described in \citet{vaswani2017attention}.}
    \label{fig:nmt}
\end{subfigure}
\caption{Optimization performance on Penn Treebank and WMT-14 machine translation task.}
\end{figure}

\begin{figure}[t]
\begin{minipage}[t]{0.48 \linewidth}
\begin{table}[H]
\caption{LSTM training, validation, and test perplexity on the Penn Treebank dataset.}
\label{tab:lstm}
\vskip 0.1in
\begin{center}
\begin{small}
\begin{sc}
\begin{tabular}{l c c c }
\toprule
Optimizer & Train & Val. & Test  \\
\midrule
SGD & 43.62 & 66.0 & 63.90 \\
LA(SGD) & 35.02 & 65.10 & 63.04 \\ 
Adam & 33.54 & 61.64 & 59.33 \\
LA(Adam) & \textbf{31.92} & \textbf{60.28} & \textbf{57.72} \\
Polyak & - & 61.18 & 58.79 \\ 
\bottomrule
\end{tabular}
\end{sc}
\end{small}
\end{center}
\vskip -0.1in
\end{table}
\end{minipage}
\hfill
\begin{minipage}[t]{0.48 \linewidth}
\begin{table}[H]
\caption{Transformer Base Model trained for 50k steps on WMT English-to-German. ``Adam-'' denote Adam without learning rate warm-up.}
\label{tab:ntm}
\vskip 0.1in
\begin{center}
\begin{small}
\begin{sc}
\begin{tabular}{l c c }
\toprule
Optimizer & Newstest13 & Newstest14   \\
\midrule
Adam & 24.6 & 24.6 \\
LA(Adam) & 24.68 & 24.70 \\ 
LA(Adam-) & 24.3 & 24.4 \\
AdaFactor & 24.17 & 24.51 \\
\bottomrule
\end{tabular}
\end{sc}
\end{small}
\end{center}
\vskip -0.1in
\end{table}
\end{minipage}
\end{figure}

Using Lookahead with Adam we were able to achieve the fastest convergence and best training, validation, and test perplexity. The models trained with SGD took much longer to converge (around 700 epochs) and were unable to match the final performance of Adam. Using Polyak weight averaging \citep{polyak1992acceleration} with SGD, as suggested by \citet{merity2017regularizing} and referred to as ASGD, we were able to improve on the performance of Adam but were unable to match the performance of Lookahead.  Full results are given in Table~\ref{tab:lstm} and additional details are in appendix \ref{app:experiments}.
 \subsection{Neural machine translation}
\label{sec:nmt}

We trained Transformer based models \citep{vaswani2017attention} on the WMT2014 English-to-German translation task on a single Tensor Processing Unit (TPU) node. We took the base model from \citet{vaswani2017attention} and trained it using the proposed warmup-then-decay learning rate scheduling scheme and, additionally, the same scheme wrapped with Lookahead. We found Lookahead speedups the early stage of the training over Adam and the later proposed AdaFactor \citep{adafactor} optimizer. All the methods converge to similar training loss and BLEU score at the end, see Figure~\ref{fig:nmt} and Table~\ref{tab:ntm}. 

Our NMT experiments further confirms Lookahead improves the robustness of the inner loop optimizer. We found Lookahead enables a wider range of learning rate \{0.02, 0.04, 0.06\} choices for the Transformer model that all converge to similar final losses. Full details are given in Appendix~\ref{app:nmt}.
 \subsection{Empirical analysis}
\label{sec:empirical_analysis}

\begin{figure*}
        \centering
        \begin{subfigure}[b]{0.475\textwidth}
            \centering
            \includegraphics[width=\textwidth]{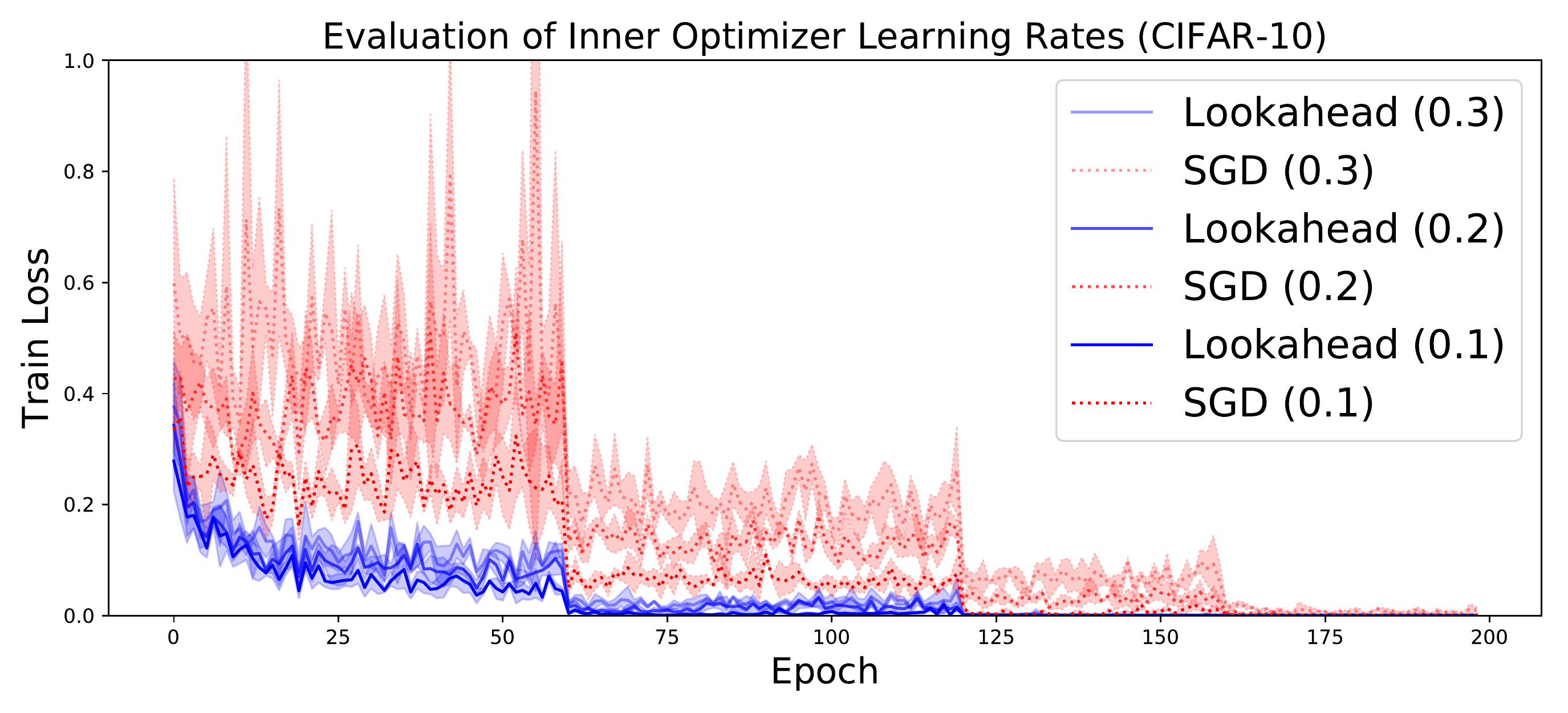}
            \caption[Network2]            {{\small CIFAR-10 Train Loss: Different LR}}    
                    \end{subfigure}
        \hfill
        \begin{subfigure}[b]{0.475\textwidth}  
            \centering 
            \includegraphics[width=\textwidth]{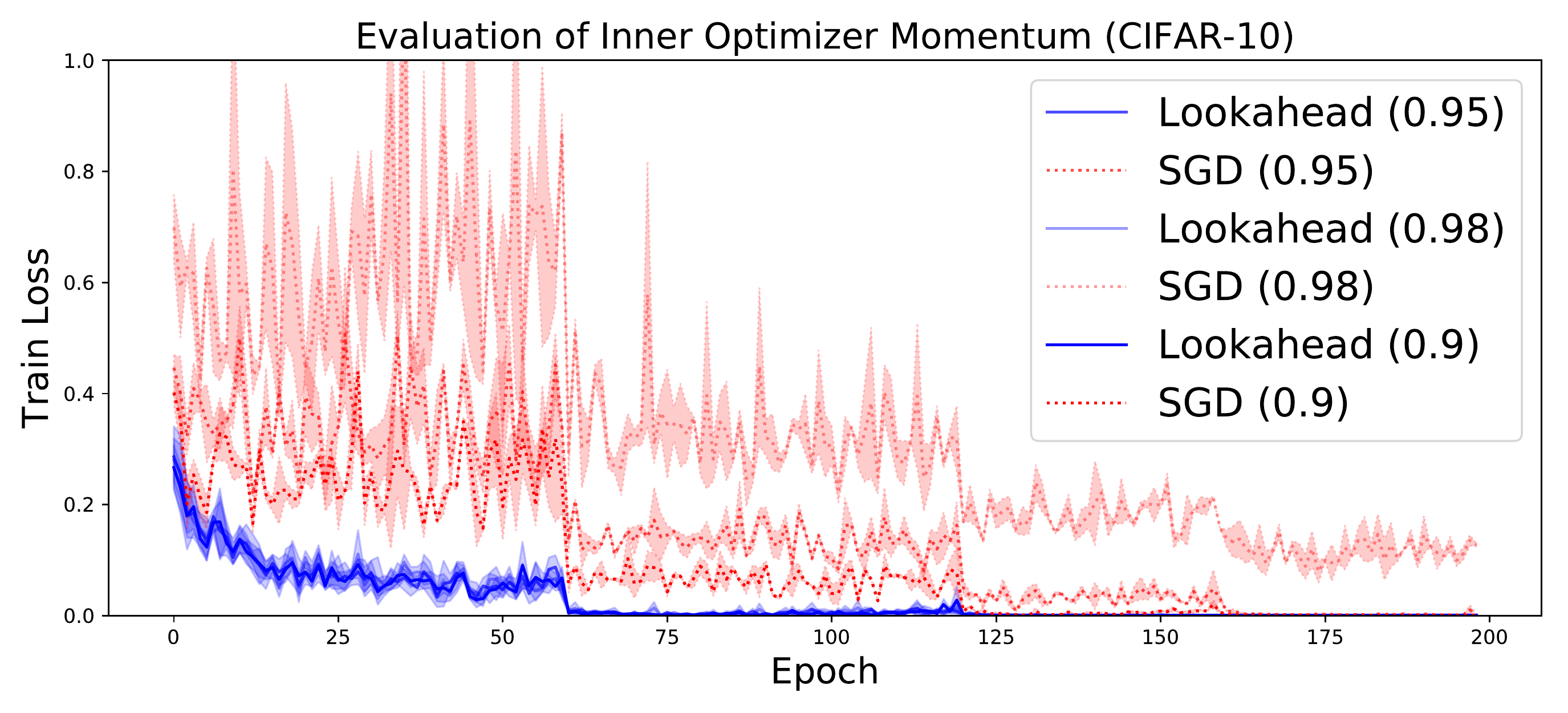}
            \caption[]            {{\small CIFAR-10 Train Loss: Different momentum}}    
                    \end{subfigure}
    \caption{We fix Lookahead parameters and evaluate on different inner optimizers.}
    \label{fig:robust-lr-momentum}
\end{figure*}

\paragraph{Robustness to inner optimization algorithm, $k$, and $\alpha$} 
We demonstrate empirically on the CIFAR dataset that Lookahead consistently delivers fast convergence across different hyperparameter settings.
We fix slow weights step size $\alpha=0.5$ and $k=5$ and run Lookahead on inner SGD optimizers with different learning rates and momentum; results are shown in Figure \ref{fig:robust-lr-momentum}. In general, we observe that Lookahead can train with higher learning rates on the base optimizer with little to no tuning on $k$ and $\alpha$. This agrees with our discussion of variance reduction in Section \ref{sec: noisyquadratic}. 
We also evaluate robustness to the Lookahead hyperparameters by fixing the inner optimizer and evaluating runs with varying updates $k$ and step size $\alpha$; these results are shown in Figure \ref{fig:kalpha_robust}.

\begin{figure}
    \centering
    \begin{minipage}{0.48 \linewidth}
    \includegraphics[width=\linewidth]{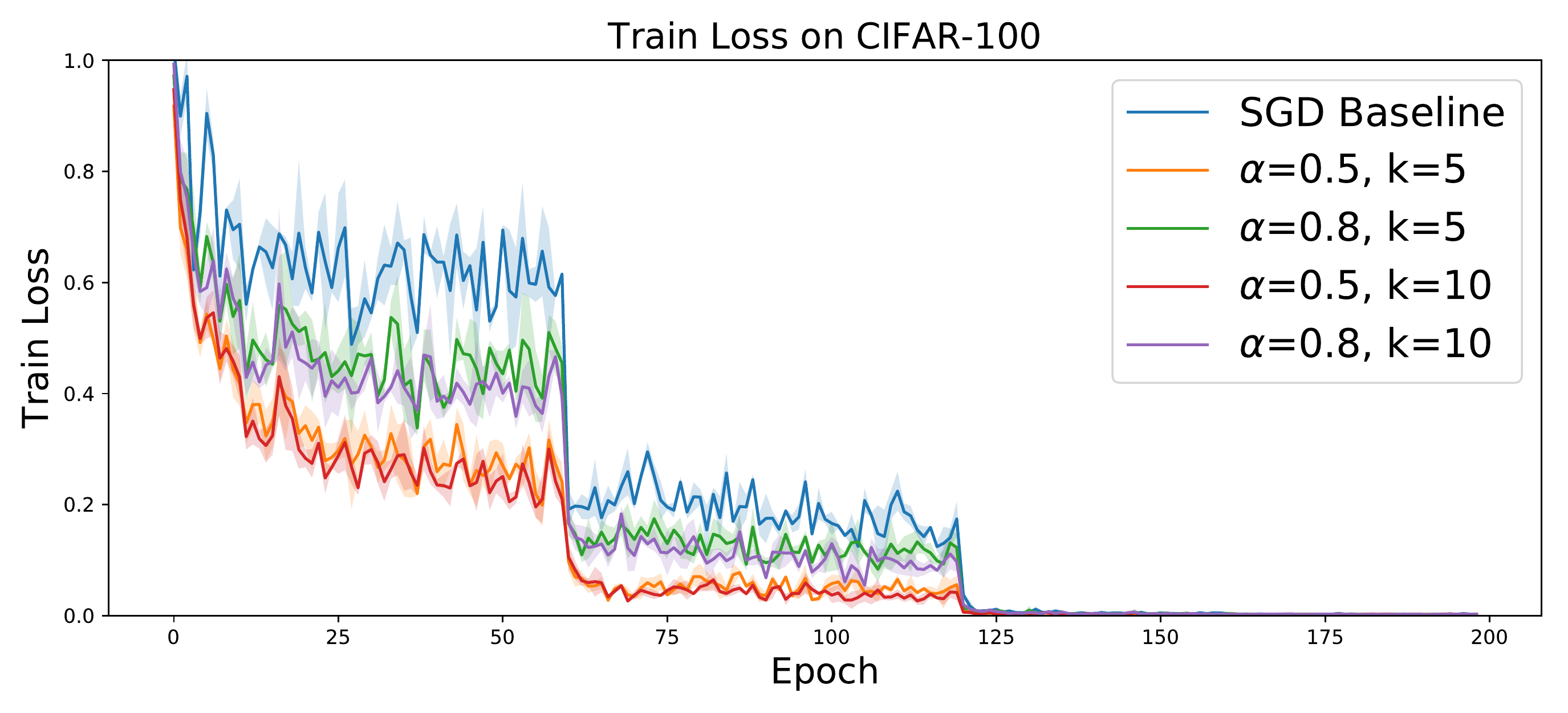}
    \end{minipage} \hfill
\begin{minipage}{0.48 \linewidth}
\begin{table}[H]
\begin{center}
\begin{scriptsize}
\begin{sc}
\begin{tabular}{l c c}
\toprule
\diagbox[width=3em]{k}{$\alpha$}& 0.5 & 0.8 \\
\midrule
5 &  $78.24 \pm .02$ & $78.27 \pm .04$   \\
10  & $78.19 \pm .22$ & $77.94 \pm .22$ \\ 
\bottomrule
\end{tabular}
\end{sc}
\end{scriptsize}
\end{center}
\caption{All settings have higher validation accuracy than SGD (77.72\%)}
\label{tabel:cifar-grid}
\end{table}
\end{minipage}
\caption{CIFAR-100 train loss and final test accuracy with various $k$ and $\alpha$.}
\label{fig:kalpha_robust}
\end{figure}

\paragraph{Inner loop and outer loop evaluation}

\begin{figure}[t!]
    \centering
                            \includegraphics[width=1.0 \textwidth]{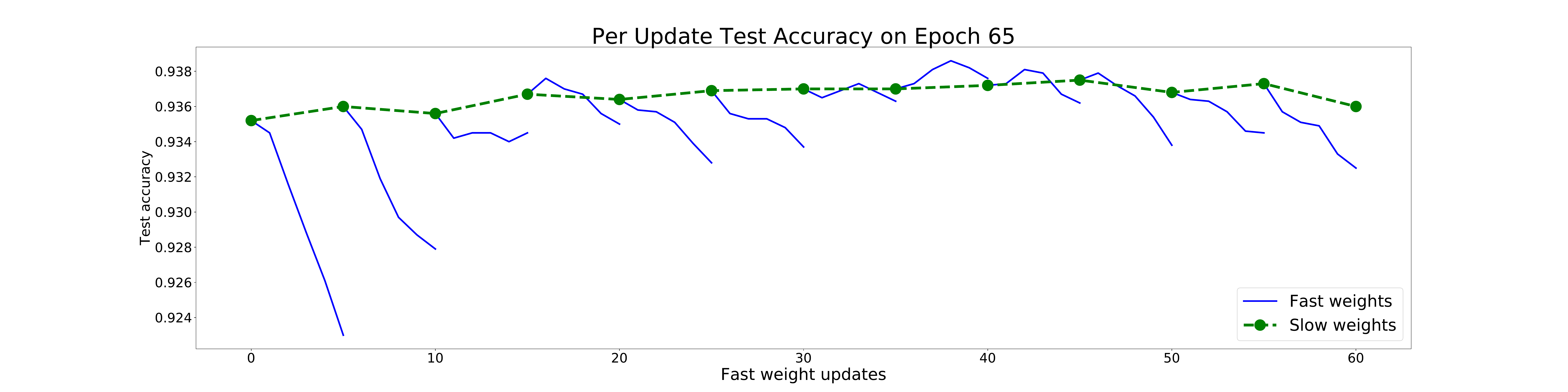}
        \caption{Visualizing Lookahead accuracy for $60$ fast weight updates. We plot the test accuracy after every update (the training accuracy and loss behave similarly). The inner loop update tends to degrade both the training and test accuracy, while the interpolation recovers the original performance.}
        \label{fig:cifar-every-batch}
\end{figure}

To get a better understanding of the Lookahead update, we also plotted the test accuracy for every update on epoch 65 in Figure~\ref{fig:cifar-every-batch}. We found that within each inner loop the fast weights may lead to substantial degradation in task performance---this reflects our analysis of the higher variance of the inner loop update in section \ref{sec: noisyquadratic}. The slow weights step recovers the outer loop variance and restores the test accuracy.

  \section{Conclusion}
\label{sec:conclusion}

In this paper, we present Lookahead, an algorithm that can be combined with any standard optimization method. Our algorithm computes weight updates by \emph{looking ahead} at the sequence of ``fast weights" generated by another optimizer. We illustrate how Lookahead improves convergence by reducing variance and show strong empirical results on many deep learning benchmark datasets and architectures. 

\newpage

 \section*{Acknowledgements}

We'd like to thank Roger Grosse, Guodong Zhang, Denny Wu, Silviu Pitis, David Madras, Jackson Wang, Harris Chan, and Mufan Li for helpful comments on earlier versions of this work. We are also thankful for the many helpful comments from anonymous reviewers. 
\bibliography{references}
\bibliographystyle{plainnat}
\clearpage
\pagebreak

\appendix
\section{Noisy quadratic analysis}
\label{app:noisy}
Here we present the details of the noisy quadratic analysis, and the proof of Proposition~\ref{prop:noisy_quad}.

\paragraph{Stochastic dynamics of SGD} From \citet{wu2018understanding}, we can compute the dynamics of SGD with learning rate $\gamma$ as follows:

\begin{align}
    \E[\bx^{(t+1)}] &= (\bI - \gamma \bA) \E[\bx^{(t)}] \\
    \V[\bx^{(t+1)}] &= (\bI - \gamma \bA)^2\V[\bx^{(t)}] + \gamma^2 \bA^2 \Sigma
\end{align}

\paragraph{Stochastic dynamics of Lookahead SGD} We now compute the dynamics of the slow weights of Lookahead.

\begin{lemma}The Lookahead slow weights have the following trajectories:

\begin{align}
    \E[\bphi_{t+1}] &= [1 - \alpha + \alpha (\bI - \gamma \bA)^k] \E[\bphi_{t}] \\
    \V[\bphi_{t+1}] &= [1-\alpha + \alpha (\bI - \gamma \bA)^k]^2\V[\bphi_t] + \alpha^2 \sum_{i=0}^{k-1} (\bI - \gamma \bA)^{2i}\gamma^2\bA^2\Sigma 
\end{align}
\end{lemma}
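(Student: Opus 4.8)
The plan is to exploit the fact that SGD on a quadratic is an affine recursion and that the Lookahead outer step is itself an affine combination, so the whole computation reduces to tracking a single linear-Gaussian map. Writing the outer update explicitly, $\bphi_{t+1} = (1-\alpha)\bphi_t + \alpha\,\theta_{t,k}$, everything hinges on the distribution of the final fast weight $\theta_{t,k}$ as a function of the inner-loop initialization $\theta_{t,0} = \bphi_t$. First I would unroll the inner SGD loop. Since the stochastic gradient of $\hat{\mathcal{L}}$ is $\bA(\bx - \bc)$ with a fresh $\bc$ drawn each inner step, the update reads $\theta_{t,i} = (\bI - \gamma\bA)\theta_{t,i-1} + \gamma\bA\,\bc_i$, where $\bc_1,\dots,\bc_k$ are the minibatch draws of inner loop $t$. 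Iterating $k$ times from $\theta_{t,0} = \bphi_t$ gives $\theta_{t,k} = (\bI - \gamma\bA)^k\,\bphi_t + \xi_t$ with $\xi_t = \sum_{i=0}^{k-1}(\bI - \gamma\bA)^{i}\gamma\bA\,\bc_{k-i}$.

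The essential structural observation I would isolate is that $\xi_t$ depends only on the minibatches sampled during the current inner loop, hence is independent of $\bphi_t$ (which is determined by strictly earlier samples), and that $\E[\xi_t] = \mathbf{0}$ because $\E[\bc] = \bx^* = \mathbf{0}$. Given this, the mean recursion is immediate from the law of total expectation: $\E[\theta_{t,k}] = (\bI-\gamma\bA)^k\E[\bphi_t]$, and substituting into the outer update yields $\E[\bphi_{t+1}] = [\,1-\alpha + \alpha(\bI-\gamma\bA)^k\,]\,\E[\bphi_t]$, which is the first claimed identity.

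For the variance I would plug the decomposition of $\theta_{t,k}$ into the outer step to obtain $\bphi_{t+1} = [\,1-\alpha + \alpha(\bI-\gamma\bA)^k\,]\,\bphi_t + \alpha\,\xi_t$. Because $\xi_t$ is independent of $\bphi_t$ and mean zero, the variance splits with no surviving cross-term, giving $\V[\bphi_{t+1}] = [\,1-\alpha + \alpha(\bI-\gamma\bA)^k\,]^2\,\V[\bphi_t] + \alpha^2\,\V[\xi_t]$. It then remains only to evaluate $\V[\xi_t] = \sum_{i=0}^{k-1}(\bI-\gamma\bA)^{2i}\gamma^2\bA^2\Sigma$; this is precisely the accumulated inner-loop variance one reads off the stated SGD variance recursion $\V[\bx^{(t+1)}] = (\bI-\gamma\bA)^2\V[\bx^{(t)}] + \gamma^2\bA^2\Sigma$ initialized at $\V[\theta_{t,0}]=0$, since $\theta_{t,0}=\bphi_t$ is deterministic given $\bphi_t$. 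All matrix manipulations are legitimate because $\bA$ and $\Sigma$ are diagonal, so $(\bI-\gamma\bA)$ commutes with every factor and the identities hold coordinatewise.

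The step I expect to require the most care is justifying the clean variance decomposition, that is, verifying rigorously that the inner-loop noise $\xi_t$ is independent of the slow weights $\bphi_t$ so that $\V[\bphi_{t+1}]$ contains no covariance contribution. This rests on the i.i.d. sampling of minibatches across loops and a short induction that $\bphi_t$ is a measurable function of $\bc$-draws preceding loop $t$. Everything else — the unrolling, the geometric sum for $\V[\xi_t]$, and the index matching — is routine bookkeeping once that independence is in place.
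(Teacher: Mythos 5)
Your proof is correct, and it takes a mildly but genuinely different route from the paper's. The paper expands $\V[\bphi_{t+1}] = (1-\alpha)^2\V[\bphi_t] + \alpha^2\V[\btheta_{t,k}] + 2\alpha(1-\alpha)\,\mathrm{cov}(\bphi_t,\btheta_{t,k})$ and then evaluates the cross-term by a coordinatewise recursion, showing $\mathrm{cov}(\theta_{t,i-1},\theta_{t,i}) = (1-\gamma a)\V[\theta_{t,i-1}]$ and hence $\mathrm{cov}(\bphi_t,\btheta_{t,k}) = (\bI-\gamma\bA)^k\V[\bphi_t]$, before recombining into the perfect square $[1-\alpha+\alpha(\bI-\gamma\bA)^k]^2$. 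You instead unroll the inner loop once and for all into $\btheta_{t,k} = (\bI-\gamma\bA)^k\bphi_t + \xi_t$ with $\xi_t$ mean-zero and independent of $\bphi_t$, so that $\bphi_{t+1}$ is a single affine map of $\bphi_t$ plus fresh noise and the squared coefficient appears automatically with no covariance bookkeeping. The two arguments encode the same fact --- your independence of $\xi_t$ from $\bphi_t$ is exactly what makes the paper's covariance come out as $(\bI-\gamma\bA)^k\V[\bphi_t]$ rather than something larger --- but your decomposition is the more transparent one: it makes explicit \emph{why} the cross-term collapses (fresh minibatches in loop $t$ versus a $\bphi_t$ measurable with respect to earlier draws), whereas the paper's recursion verifies it computationally. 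The one point you correctly flag as needing care, the measurability/independence induction, is the only nontrivial content in either version; your evaluation of $\V[\xi_t]=\sum_{i=0}^{k-1}(\bI-\gamma\bA)^{2i}\gamma^2\bA^2\Sigma$ and the geometric-sum index matching are both right.
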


\begin{proof}
The expectation trajectory follows from SGD,
\begin{align*}
    \E[\bphi_{t+1}] &= (1-\alpha) \E[\bphi_{t}] + \alpha \E[\btheta_{t,k}] \\
    &= (1-\alpha) \E[\bphi_{t}] + \alpha (\bI - \gamma \bA)^k \E[\bphi_{t}] \\
    &= [1 - \alpha + \alpha (\bI - \gamma \bA)^k] \E[\bphi_{t}]
\end{align*}

For the variance, we can write $\V[\bphi_{t+1}] = (1-\alpha)^2 \V[\bphi_{t}] + \alpha^2 \V[\btheta_{t,k}] + 2\alpha(1-\alpha)\text{cov}(\bphi_t, \btheta_{t,k})$. We proceed by computing the covariance term recursively. For simplicity, we work with a single element, $\theta$, of the vector $\btheta$ (as $\bA$ is diagonal, each element evolves independently).
\begin{align*}
    \text{cov}(\theta_{t,k-1}, \theta_{t,k}) &= \E[(\theta_{t,k-1} - \E[\theta_{t,k-1}])(\theta_{t,k} - \E[\theta_{t,k}])] \\
    &= \E[(\theta_{t,k-1} - \E[\theta_{t,k-1}])(\theta_{t,k} - (1-\gamma a)\E[\theta_{t,k-1}])] \\
    &= \E[\theta_{t,k-1}\theta_{t,k}] - (1 - \gamma a)\E[\theta_{t,k-1}]^2 \\
    &= \E[(1-\gamma a)\theta^2_{t,k-1}] - (1 - \gamma a)\E[\theta_{t,k-1}]^2 \\
    &= (1-\gamma a)\V[\theta_{t,k-1}]
\end{align*}
A similar derivation yields $\text{cov}(\bphi_t, \btheta_{t,k}) = (\bI - \gamma \bA)^k \V[\phi_t]$. After substituting the SGD variance formula and some rearranging we have,
\begin{equation*}
    \V[\bphi_{t+1}] = [1-\alpha + \alpha (\bI - \gamma \bA)^k]^2\V[\bphi_t] + \alpha^2 \sum_{i=0}^{k-1} (\bI - \gamma \bA)^{2i}\gamma^2\bA^2\Sigma
\end{equation*}
\end{proof}
We now proceed with the proof of Proposition~\ref{prop:noisy_quad}.

\begin{proof}
First note that if the learning rate is chosen as specified, then each of the trajectories is a contraction map. By Banach's fixed point theorem, they each have a unique fixed point. Clearly the expectation trajectories contract to zero in each case.

For the variance we can solve for the fixed points directly. For SGD,
\begin{align*}
    V^*_{SGD} &= (1-\gamma \bA)^2 V^*_{SGD} + \gamma \bA^2 \Sigma, \\
    \Rightarrow V^*_{SGD} &= \frac{\gamma^2 \bA^2 \Sigma}{\bI - (\bI - \gamma \bA)^2}.
\end{align*}

For Lookahead, we have,
\begin{align*}
    V^*_{LA} &= [1-\alpha + \alpha (\bI - \gamma \bA)^k]^2 V^*_{LA} +\alpha^2 \sum_{i=0}^{k-1} (\bI - \gamma \bA)^{2i}\gamma^2\bA^2\Sigma \\
    \Rightarrow V^*_{LA} &= \frac{\alpha^2\sum_{i=0}^{k-1} (\bI - \gamma \bA)^{2i}}{\bI - [(1-\alpha)\bI + \alpha (\bI - \gamma \bA)^k]^2} \gamma^2 \bA^2\Sigma \\
\Rightarrow V^*_{LA} &= \frac{\alpha^2 (\bI - (\bI - \gamma\bA)^{2k})}{\bI - [(1 - \alpha) \bI + \alpha(\bI - \gamma\bA)^k]^2} \frac{\gamma^2 \bA^2 \Sigma}{\bI - (\bI - \gamma \bA)^2}
\end{align*}

where for the final equality, we used the identity $\sum_0^k a^i = (1-a^k)/(1-a)$. Some standard manipulations of the denominator on the first term lead to the final solution,
\begin{align*}
    V_{LA}^* = \frac{\alpha^2 (\bI - (\bI - \gamma\bA)^{2k})}{\alpha^2 (\bI - (\bI - \gamma\bA)^{2k}) + 2\alpha(1 - \alpha)(\bI - (\bI - \gamma\bA)^k)} \frac{\gamma^2 \bA^2 \Sigma^2}{\bI - (\bI - \gamma \bA)^2}
\end{align*}
\end{proof}

For the same learning rate, Lookahead will achieve a smaller loss as the variance is reduced more. However, the convergence speed of the expectation term will be slower as we must compare $1-\alpha + \alpha(\bI-\gamma\bA)^k$ to $(\bI-\gamma\bA)^k$ and the latter is always smaller for $\alpha < 1$. In our experiments, we observe that Lookahead typically converges much faster than its inner optimizer. We speculate that the learning rate for the inner optimizer is set sufficiently high such that the variance reduction term is more important--this is the more common regime for neural networks that attain high validation accuracy, as higher initial learning rates are used to overcome the short-horizon bias \citep{wu2018understanding}.

\subsection{Comparing convergence rates} In Figure~\ref{fig:noisy_quad_convergence} we compared the convergence rates of SGD and Lookahead. We specified the eigenvalues of $A$ according to the worst-case model from \citet{li2005sharpness} (also used by \citet{wu2018understanding}) and set $\Sigma = A^{-1}$. We computed the expected loss (Equation~\ref{eqn:noisy_exp_loss}) for learning rates in the range $(0,1)$ for SGD and Lookahead with $\alpha \in (0,1]$, with $k=5$, at time $T=1000$ (by unrolling the above dynamics). We computed the variance fixed point for each learning rate under each optimizer and use this value to compute the optimal loss. Finally, we plot the difference between the expected loss at $T$ and the final loss, as a function of the final loss. This allows us to compare the convergence performance between SGD and Lookahead optimization settings which converge to the same solution.

\paragraph{Further convergence plots} In Figure~\ref{fig:noisy_quad_more_convergence} we present additional plots comparing the convergence performance between SGD and Lookahead. In (a) we show the convergence of Lookahead for a single choice of $\alpha$, where our method is able to outperform SGD even for this fixed value. In (b) we show the convergence after only a few updates. Here SGD outperforms Lookahead for some smaller choices of $\alpha$.  This is because SGD is able to make progress on the expectation more rapidly and reduces this part of the loss quickly --- this is related to the short-horizon bias phenomenon \citep{wu2018understanding}. However, even with only a few updates there are choices of $\alpha$ which are able to outperform SGD.

\begin{figure}
    \begin{minipage}{.5\textwidth}
    \centering
    \includegraphics[width=\linewidth]{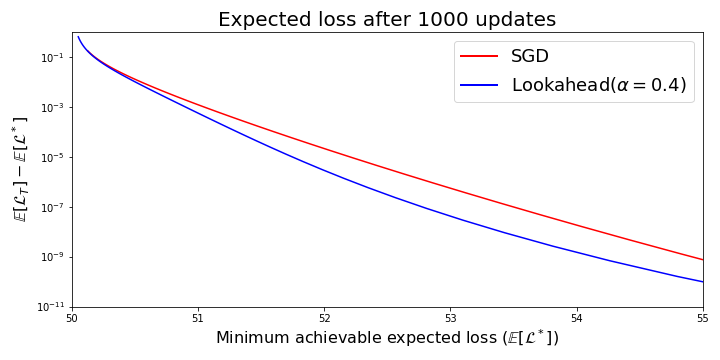}\\(a)
    \end{minipage}    \begin{minipage}{.5\textwidth}
    \centering
    \includegraphics[width=\linewidth]{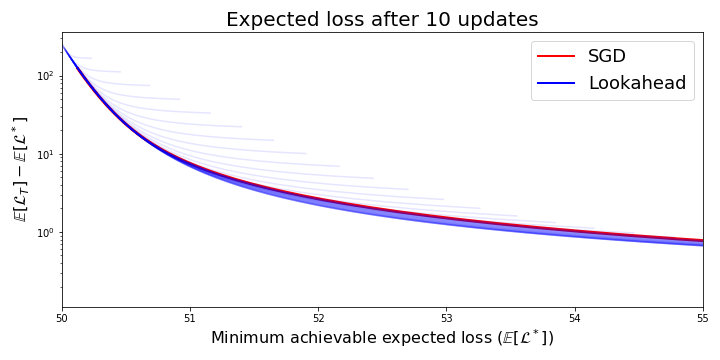}\\(b)
    \end{minipage}
    \caption{Convergence of SGD and Lookahead on the noisy quadratic model. (a): We show the convergence of Lookahead with a single fixed choice of $\alpha=0.4$. (b): We compare the early stage performance of Lookahead to SGD over a range of $\alpha$ values. }
    \label{fig:noisy_quad_more_convergence}
\end{figure}

We also measured the optimal expected loss after some finite time horizon for both Lookahead and SGD in Figure \ref{fig:finite_horizon_nqm}. We performed a fine-grained grid search over the learning rate for SGD and both the learning rate and $\alpha$ for Lookahead (keeping $k=5$ fixed). We evaluated 100 learning rates equally spaced on a log-scale in the range $[10^{-4},10^{-1}]$. For Lookahead, we additionally evaluated 50 $\alpha$ values equally spaced on a log-scale in the range $[10^{-4}, 1]$. For every time horizon, there exists settings of Lookahead that outperform SGD.

\begin{figure}
    \centering
    \includegraphics[width=\linewidth, scale=0.6]{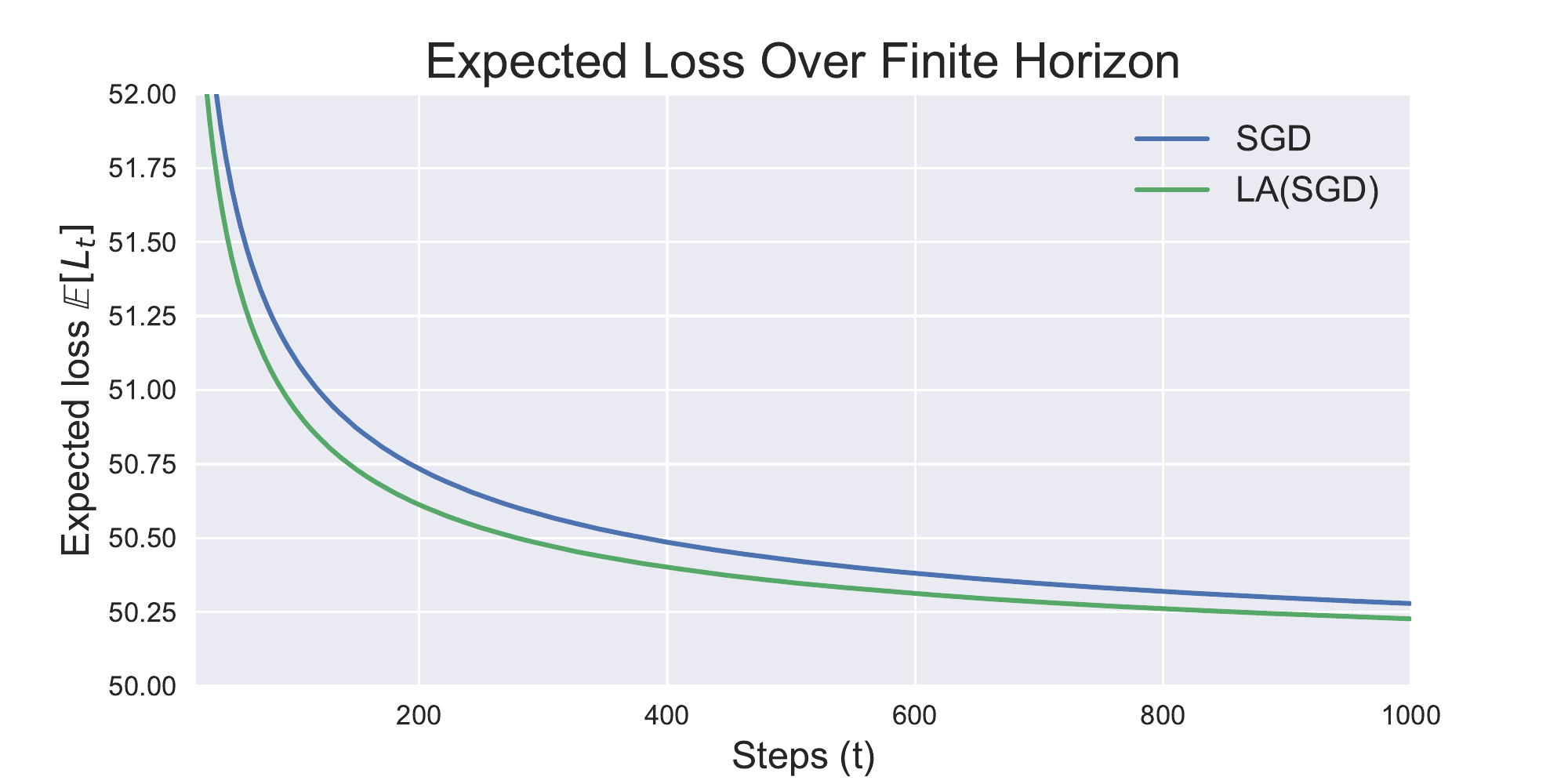}
    \caption{Expected loss of SGD and Lookahead with (constant-through-time) hyperparameters tuned to be optimal at a finite time horizon. At each finite horizon ($x$-axis) we perform a grid search to find the best expected loss of each optimizer. Lookahead dominates SGD over all time horizons.}
    \label{fig:finite_horizon_nqm}
\end{figure}
 \section{Deterministic quadratic convergence analysis}
\label{app:quadratic}

Here we present additional details on the quadratic convergence analysis.

\subsection{Lookahead as a dynamical system}

As in the main text, we will assume that the optimum lies at $\btheta^* = \mathbf{0}$ for simplicity, but the argument easily generalizes. Here we consider the more general case of a quadratic function $f(\bx) = \frac{1}{2} \bx^T A \bx$. We use $\eta$ to denote the CM learning rate and $\beta$ for it's momentum coefficient.

First we can stack together a full set of fast weights and write the following,

\[ 
\left[\begin{array}{c}
\btheta_{t,0} \\
\btheta_{t-1, k} \\
\vdots \\
\btheta_{t-1, 1}
\end{array}\right] = LB^{(k-1)}T \left[\begin{array}{c} \btheta_{t-1, 0} \\
\btheta_{t-2, k} \\
\vdots \\
\btheta_{t-2, 1}
\end{array}\right]
\]

Here, $L$ represents the Lookahead interpolation, $B$ represents the update corresponding to classical momentum in the inner-loop and $T$ is a transition matrix which realigns the fast weight iterates.

Each of these matrices takes the following form,

\[ 
L =
\left[\begin{array}{ccccc}
\alpha I & 0 & \cdots & 0  & (1-\alpha) I \\
I & 0 & \cdots & \cdots & 0 \\
0 & I & \ddots &  \ddots & \vdots \\
\vdots & \ddots & \ddots &  0 & \vdots \\
0 & \cdots & 0 & I & 0
\end{array}\right]
\]

\[ 
B =
\left[\begin{array}{ccccc}
(1 + \beta) I - \eta A & -\beta I & 0 & \cdots  & 0 \\
I & 0 & \cdots & \cdots & 0 \\
0 & I & \ddots &\ddots & \vdots \\
\vdots & \ddots & \ddots &  0 & \vdots \\
0 & \cdots & 0 & I & 0
\end{array}\right]
\]

\[ 
T =
\left[\begin{array}{cccccc}
I - \eta A & \beta I & -\beta I & 0 & \cdots & 0 \\
I & 0 & \cdots & \cdots & 0 & \vdots\\
0 & I & \ddots &  \cdots & \vdots & \vdots\\
\vdots & \ddots & \ddots &  0 & \vdots & \vdots \\
\vdots & \cdots & 0 & I & 0 & 0 \\
0 & \cdots & 0 & 0 & I & 0
\end{array}\right]
\]

Each matrix consists of four blocks. The bottom left block is always an identity matrix that shifts the iterates along one index. The bottom right column is all zeros with the top-right column being non-zero only for $L$ which applies the Lookahead interpolation. The top left row is used to apply the Lookahead/CM updates in each matrix.

After computing the appropriate product of these matrices, we can use standard solvers to compute the eigenvalues which bound the convergence of the linear dynamical system (see e.g. \citet{lessard2016analysis} for an exposition). Finally, note that because this linear dynamical systems corresponds to $k$ updates (or one slow-weight update) we must compute the $k^{th}$ root of the eigenvalues to recover the correct convergence bound.

\subsection{Optimal slow weight step size}
\label{app:optimal-slow}
We present the proof of Proposition 1 for the optimal slow weight step size $\alpha^*$.

\begin{proof}
We compute the derivative with respect to $\alpha$ 
\[\nabla_{\alpha} L(\theta_{t,0} + \alpha (\theta_{t,k} - \theta_{t,0}))
=  (\theta_{t,k} - \theta_{t,0})^T A (\theta_{t,0} + \alpha (\theta_{t,k} - \theta_{t,0})) - (\theta_{t,k} - \theta_{t,0})^T b\]
Setting the derivative to 0 and using $b = A \theta^*$:
\begin{align}
    &\alpha [ (\theta_{t,k} - \theta_{t,0})^T A  (\theta_{t,k} - \theta_{t,0})] =  (\theta_{t,k} - \theta_{t,0})^T A (\theta^* - \theta_{t,0}) \\ 
    \implies &\alpha^* = \arg \min_{\alpha} L(\theta_{t,0} + \alpha (\theta_{t,k} - \theta_{t,0})) = \frac{(\theta_{t,0} - \theta^*)^T A(\theta_{t,0} - \theta_{t, k})}{(\theta_{t,0}  - \theta_{t,k})^T A (\theta_{t,0} - \theta_{t,k})}  
\end{align}
\end{proof}

We approximate the optimal $\theta^* = \theta_{t,k}  - \hat{A}^{-1} \hat{\nabla} L(\theta_{t,k})$, since the Fisher can be viewed as an approximation to the Hessian \cite{martens2014new}. Stochastic gradients are computed on mini-batches used in training so as to not incur additional computational cost.  Because the algorithm with fixed $\alpha$ performs so well, we only did preliminary experiments with an adaptive $\alpha$. We note that the approximation is greedy and incorporating priors on noise and curvature is an interesting direction for future work. \section{Experimental setup}
\label{app:experiments}

Here we present additional details on the experiments appearing in the main paper. 

\subsection{CIFAR-classification}
We run every experiment with three random seeds using the publicly available setup from \citep{devries2017improved}. A reviewer helpfully noted that this implementation of ResNet-18 has wider channels and more parameters than the original. For future work, it would be better to follow the original ResNet architecture for CIFAR classification. However, we observed consistently better convergence with Lookahead across different architectures and have results with other architectures in Appendix \ref{app:swa-comparison}. Our plots show the mean value with error bars of one standard deviation. We use a standard training procedure that is the same as that of \citet{Zagoruyko2016WRN}. That is, images are zero-padded with $4$ pixels on each side and then a random $32 \times 32$ crop is extracted and mirrored horizontally $50\%$ of the time. Inputs are normalized with per-channel means and standard deviations. Lookahead is evaluated on the slow weights of its inner optimizer. To make this evaluation fair, we evaluate the training loss at the end of each epoch by iterating through the training set again, without performing any gradient updates.

We conduct hyperparameter searches over learning rates and weight decay values, making choices based on final validation performance. For SGD, we set the momentum to 0.9 and sweep over the learning rates \{0.01, 0.03, 0.05, 0.1, 0.2, 0.3\} and weight decay values of \{0.0003, 0.001, 0.003\}. The best choice was a learning rate of 0.05 and weight decay of 0.001. We found AdamW \citep{loshchilov2017fixing} to perform better than Adam and refer it to as Adam throughout our CIFAR experiment section. For Adam, we do a grid search on learning rate of \{1e-4, 3e-4, 1e-3, 3e-3, 1e-2\} and weight decay values of \{0.1, 0.3, 1, 3\}. The best choice was a learning rate of 3e-4 and weight decay of 1. For Polyak averaging, we compute the moving average of SGD use the best weight decay from SGD and sweep over the learning rates \{0.05, 0.1, 0.2, 0.3, 0.5\}. The best choice was a learning rate of $0.3$.

For Lookahead, we set the inner optimizer SGD learning rate to $0.1$ and do a grid search over $\alpha = \{0.2, 0.5, 0.8\}$ and $k = \{5, 10\}$, with $\alpha=0.8$ and $k=5$ performing best (though the choice is fairly robust). We report the verison of Lookahead that resets momentum in our CIFAR experiments. 

\subsection{ImageNet}
For the baseline algorithm, we used SGD with a heavy ball momentum of $0.9$. We swept over learning rates in the set: $\{0.01, 0.02, 0.05, 0.1, 0.2, 0.3\}$ and selected a learning rate of 0.1 because it had the highest final validation accuracy.

We directly wrapped Lookahead around the settings provided in the official PyTorch repository repository with $k =5$ and $\alpha=0.5$ (where SGD has a learning rate of 0.1 in the inner loop). Observing the improved convergence of our algorithm, we tested Lookahead with the aggressive learning rate decay schedule (decaying at the 30th, 48th, and 58th epochs). We run our experiments on 4 Nvidia P100 GPUs with a batch size of 256 and weight decay of 1e-4. We used the same settings in the ResNet-152 experiments. 

\subsection{Language modeling}

For the language modeling task we used the model and code provided by \citet{merity2017regularizing}. We used the default settings suggested in this codebase at the time of usage which we report here. The LSTM we trained had 3 layers each containing 1150 hidden units. We used word embeddings of dimension 400.  Within each hidden layer we apply dropout with probability 0.3 and the input embedding layers use dropout with probability 0.65. We applied dropout to the embedding layer itself with probability 0.1. We used the weight drop method proposed in \citet{merity2017regularizing} with probability 0.5. We adopt the regularization proposed in section 4.6 in \citet{merity2017regularizing}: RNN activations have L2 regularization applied to them with a scaling of 2.0, and temporal activation regularization is applied with scaling 1.0. Finally, all weights receive a weight decay of 1.2e-6.

We trained the model using variable sequence lengths and batch sizes of 80. We apply gradient clipping of 0.25 to all optimizers. During training, if validation loss has not decreased for 15 epochs then we reduce the learning rate by half.  Before applying Lookahead, we completed a grid search over the Adam and SGD optimizers to find competitive baseline models. For SGD we did not apply momentum and searched learning rates in the range $\{50, 30, 10, 5, 2.5, 1, 0.1\}$. For Adam we kept the default momentum values of $(\beta_1, \beta_2) = (0.9, 0.999)$ and searched over learning rates in the range $\{0.1, 0.05, 0.01, 0.005, 0.001, 0.0005, 0.0001\}$. We chose the best model by picking the model which achieved the best validation performance at any point during training.

After picking the best SGD/Adam hyperparameters we trained the models again using Lookahead with the best baseline optimizers for the inner-loop. We tried using $k=\{5,10,20\}$ inner-loop updates and $\alpha = \{ 0.2, 0.5, 0.8\}$ interpolation coefficients. Once again, we reported Lookahead's final performance by choosing the parameters which gave the best validation performance during training.

For this task, $\alpha=0.5$ or $\alpha=0.8$ and $k=5$ or $k=10$ worked best. As in our other experiments, we found that Lookahead was largely robust to different choices of $k$ and $\alpha$. We expect that we could achieve even better results with Lookahead if we jointly optimized the hyperparameters of Lookahead and the underlying optimizer.

\subsection{Neural machine translation}
\label{app:nmt}

For this task, we trained on a single TPU core that has 8 workers each with a minibatch size of 2048. We use the default hyperparameters for Adam \citep{vaswani2017attention} and AdaFactor \citep{adafactor} in the experiments. For Lookahead, we did a minor grid search over the learning rate $\{0.02, 0.04, 0.06\}$ and $k=\{5, 10\}$ while setting $\alpha=0.5$. We found learning rate 0.04 and $k=10$ worked best. After we train those models for 250k steps, they can all reach around 27 BLEU on Newstest2014 respectively.

\section{Additional Experiments}

\subsection{Inner Optimizer State}
\label{app:interpolation}

Throughout our paper, we maintain the state of our inner optimizer for simplicity. For SGD with heavy-ball momentum, this corresponds to preserving the momentum. Here, we present a sensitivity study by comparing the convergence of Lookahead when maintaining the momentum, interpolating the momentum, and resetting the momentum. All three improve convergence versus SGD.

\begin{figure}[t]
\centering
\begin{minipage}[t]{0.48 \linewidth}
    \centering
    \includegraphics[width=1. \textwidth]{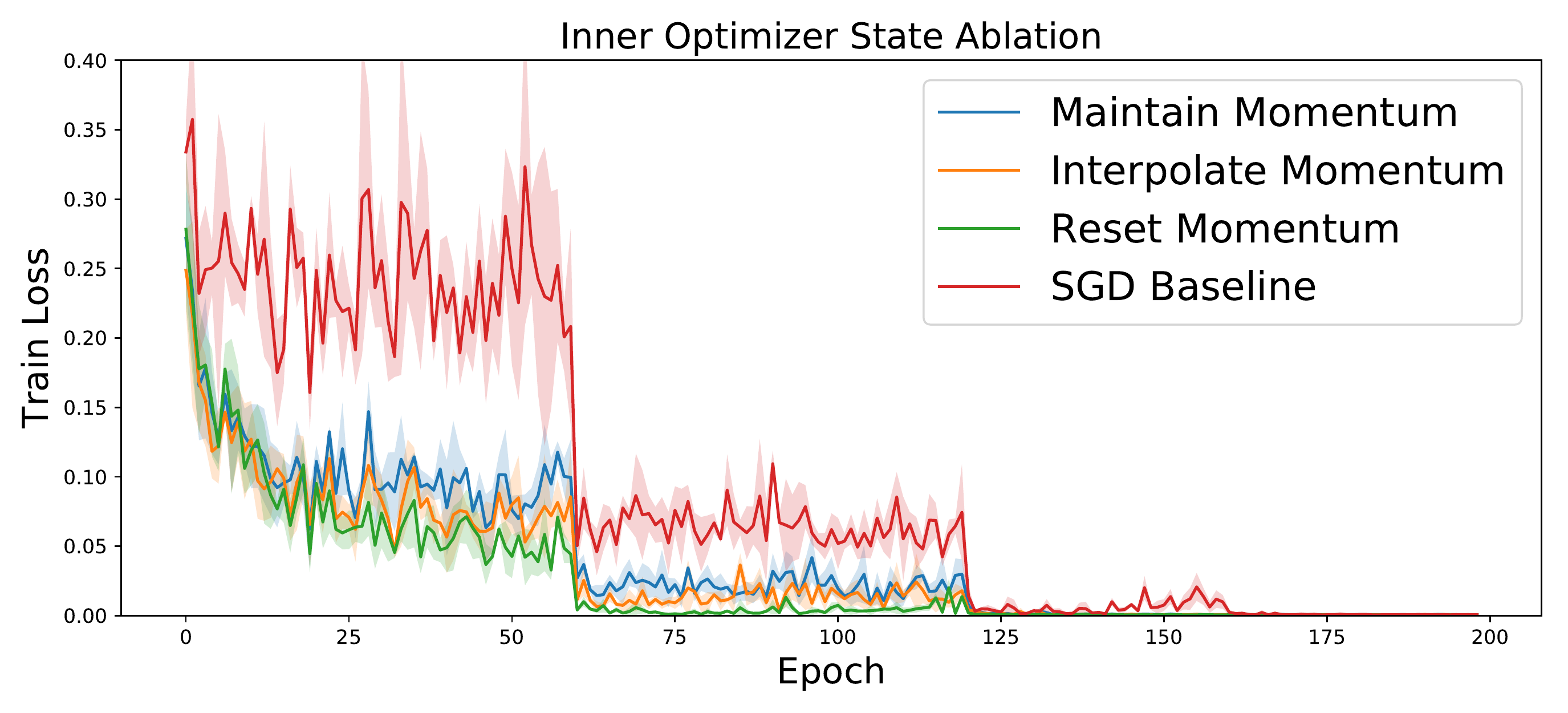}
\end{minipage}
\hfill
\begin{minipage}[t]{0.48 \linewidth}
\begin{table}[H]
\begin{center}
\vspace{-1.1in}
\begin{sc}
\begin{tabular}{l c c  }
\toprule
Optimizer & CIFAR-10  \\
\midrule
Maintain & $95.15 \pm .08$  \\ 
Interpolate & $95.16 \pm .13$ \\
Reset & $94.91 \pm .05$  \\ 
\bottomrule
\end{tabular}
\end{sc}
\end{center}
\caption{CIFAR Final Validation Accuracy.}
\label{tabel:cifar-interpolation}
\end{table}
\end{minipage}
\caption{Evaluation of maintaining, interpolating, and resetting momentum on CIFAR-10}
\end{figure}

\subsection{Validation Accuracy}
We present curves corresponding to the evolution of validation accuracy during training on CIFAR and ImageNet datasets. Though not the focus of our work, we find that faster convergence in training loss does in fact correspond to better validation performance on these datasets.

\begin{figure*}
\begin{minipage}[t]{0.48 \linewidth}
    \centering
    \includegraphics[width=.98\linewidth]{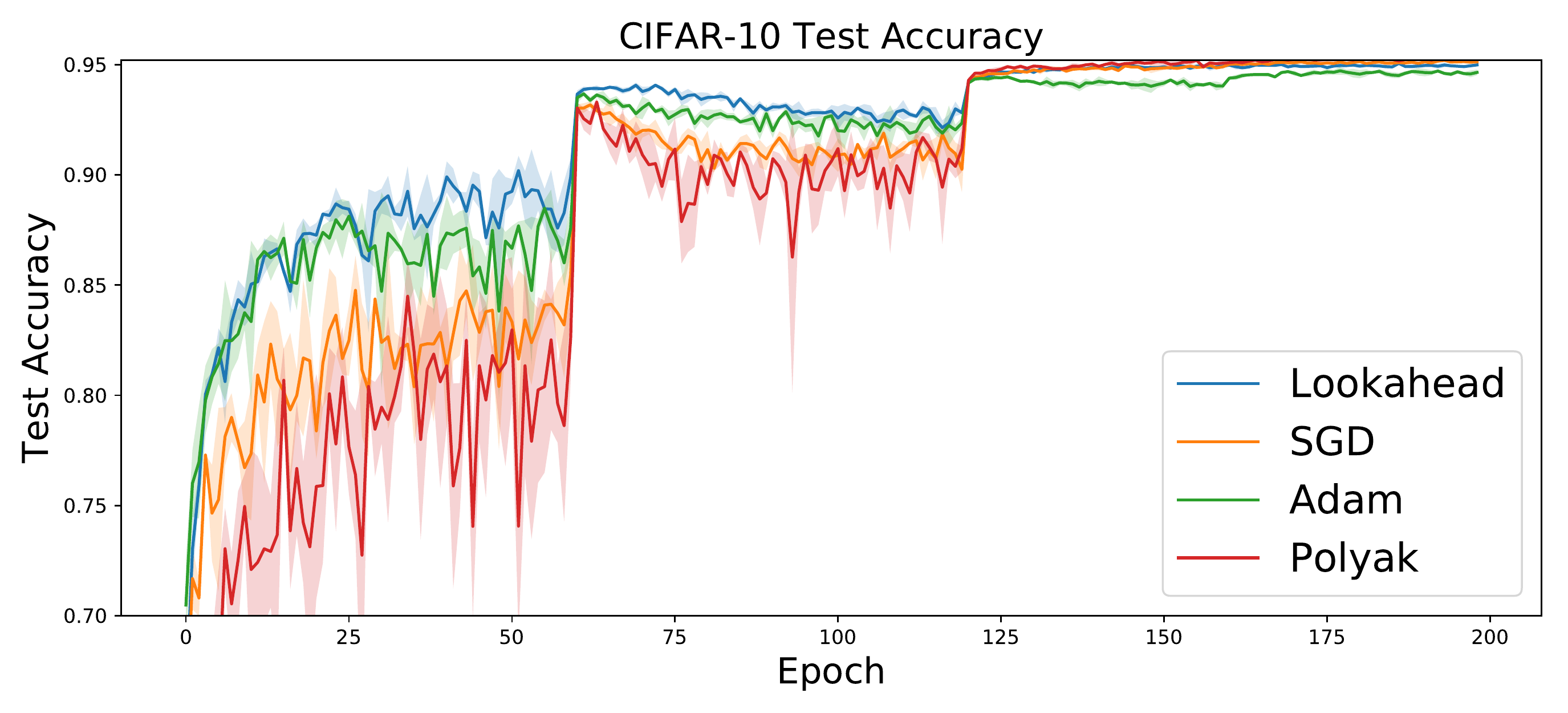}
\end{minipage}
\hfill
\begin{minipage}[t]{0.48 \linewidth}
    \centering
\includegraphics[width=.98\linewidth]{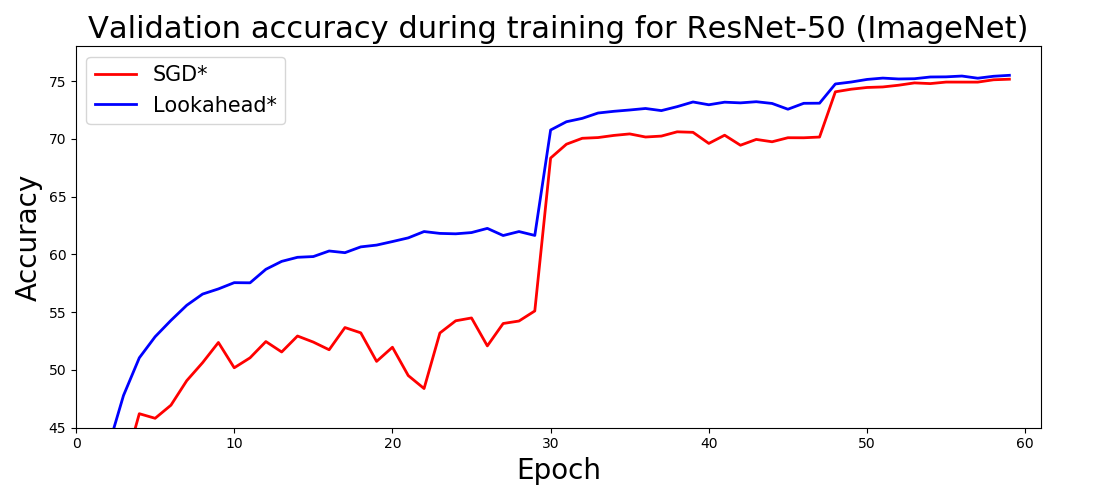}
\end{minipage}
\centering
\caption{Evolution of test accuracy on CIFAR-10 and ImageNet.}
\vskip -0.1in
\label{fig:cifar-val}
\end{figure*}

\subsection{Comparison to Stochastic Weight Averaging}
\label{app:swa-comparison}

In this subsection, we elaborate on differences between Stochastic Weight Averaging (SWA) \cite{izmailov2018averaging} and Lookahead, showing that they serve different purposes but can be complementary. First, SWA and the general family of methods that perform tail averaging \cite{ruppert1988efficient, polyak1992acceleration} requires a choice of when to begin averaging. A choice that is either too early or too late can be detrimental to performance. This is illustrated in Figure \ref{fig:swa_resnet}, where we plot comparisons of the test accuracy of three runs of Lookahead and SWA, using SGD in the inner loop of both algorithms. We use the suggested schedule from the SWA paper, which has higher learning rates than is typical at the end of training. SWA achieves better performance when initialized from epoch 10 compared to epoch 1, due to poor performance from the earlier models. In contrast, Lookahead is used from initialization and does not have this tail averaging start decision. While SWA performs better during the intermediate stages of training (since it is loosely approximating an ensemble by averaging the weights of multiple models), Lookahead with its variance reduction properties achieves better final performance, even with the modified learning rate schedule. Lookahead also computes an exponential moving average of its fast weights rather than the artithmetic average of SWA, which increases the emphasis on recent proposals of weights \cite{martens2014new}.

\begin{wrapfigure}{r}{.6\linewidth}
\vskip -0.2in
\centering
\includegraphics[width=.9 \linewidth]{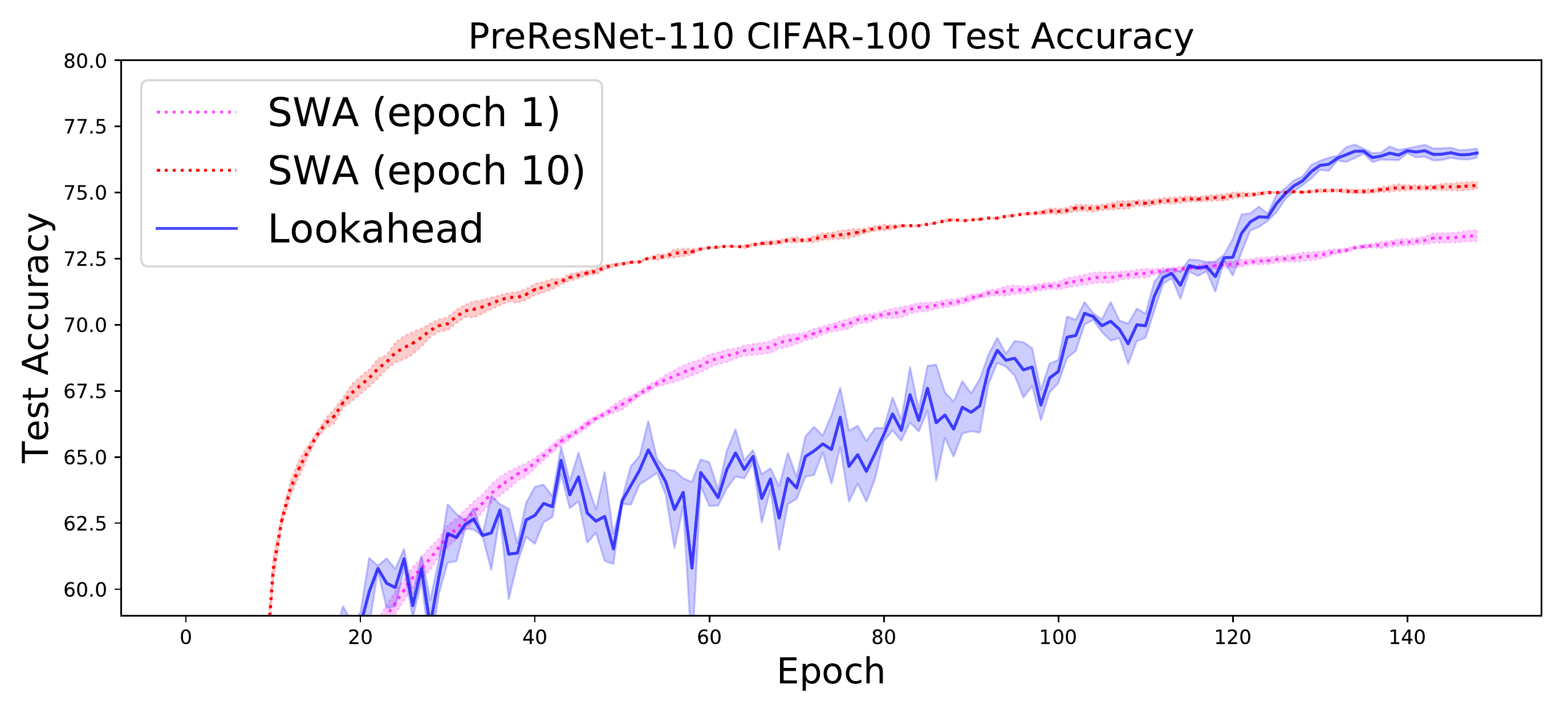}
\caption{Test Accuracy on CIFAR-100 with SWA and Lookahead (PreResNet-110). We follow exactly the hyperparameter settings in their repository and also run Lookahead with $\alpha = 0.8$ and $k=10$. Note that the learning rate schedule uses a learning rate that is higher than is typical at the end of training.}
\label{fig:swa_resnet}
\end{wrapfigure}

We do believe that Lookahead is complementary to SWA and traditional techniques for ensembling models. To this end, we perform three runs with Wide ResNet-28-10 \cite{Zagoruyko2016WRN} on CIFAR-100 with SWA and compare two choices of the inner loop algorithm of SWA. The inner loop algorithms are SGD (as in \cite{izmailov2018averaging}) and Lookahead wrapped around SGD. The runs with Lookahead achieve higher test accuracy throughout training and in the weight averaged network.

\begin{figure*}
\begin{minipage}[t]{0.48 \linewidth}
    \centering
    \includegraphics[width=.98\linewidth]{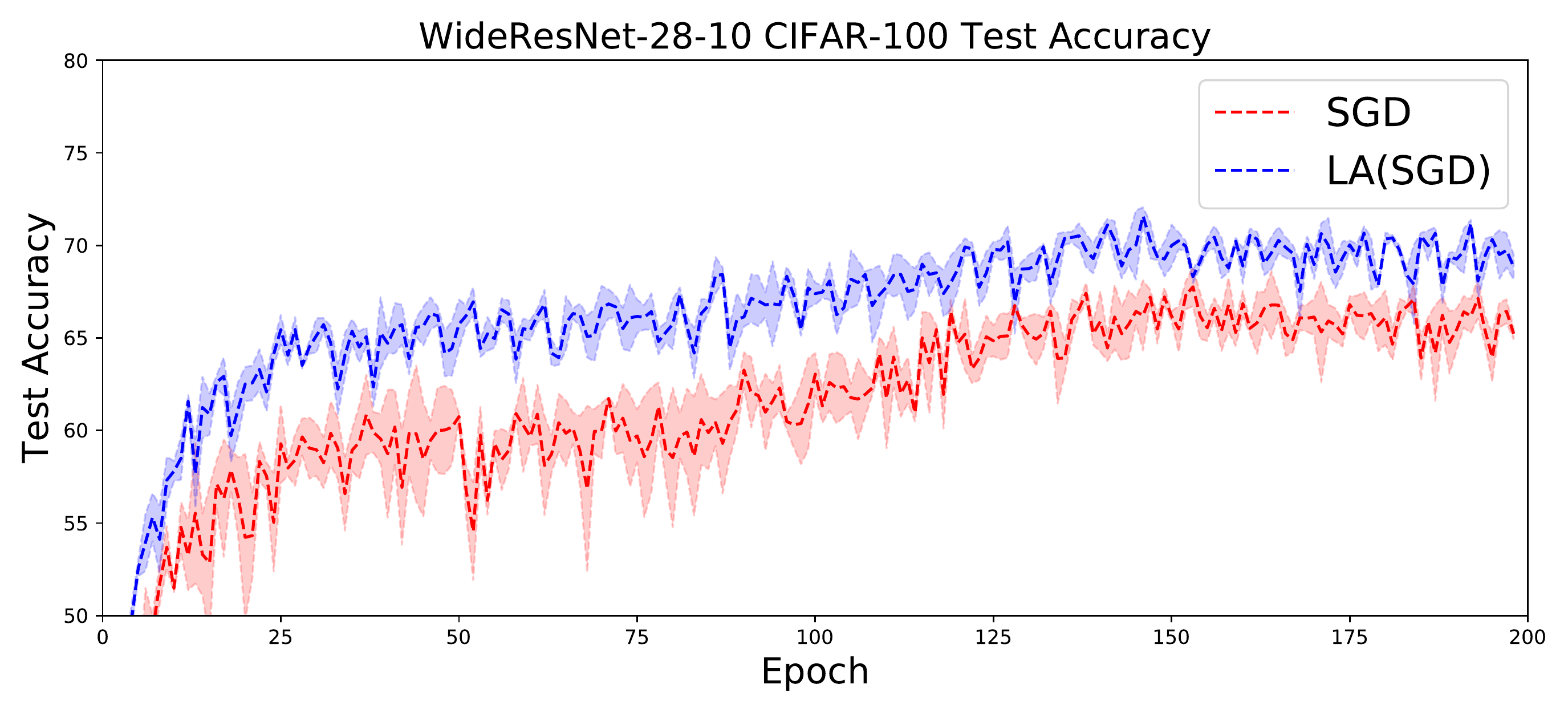}
\end{minipage}
\hfill
\begin{minipage}[t]{0.48 \linewidth}
    \centering
\includegraphics[width=.98\linewidth]{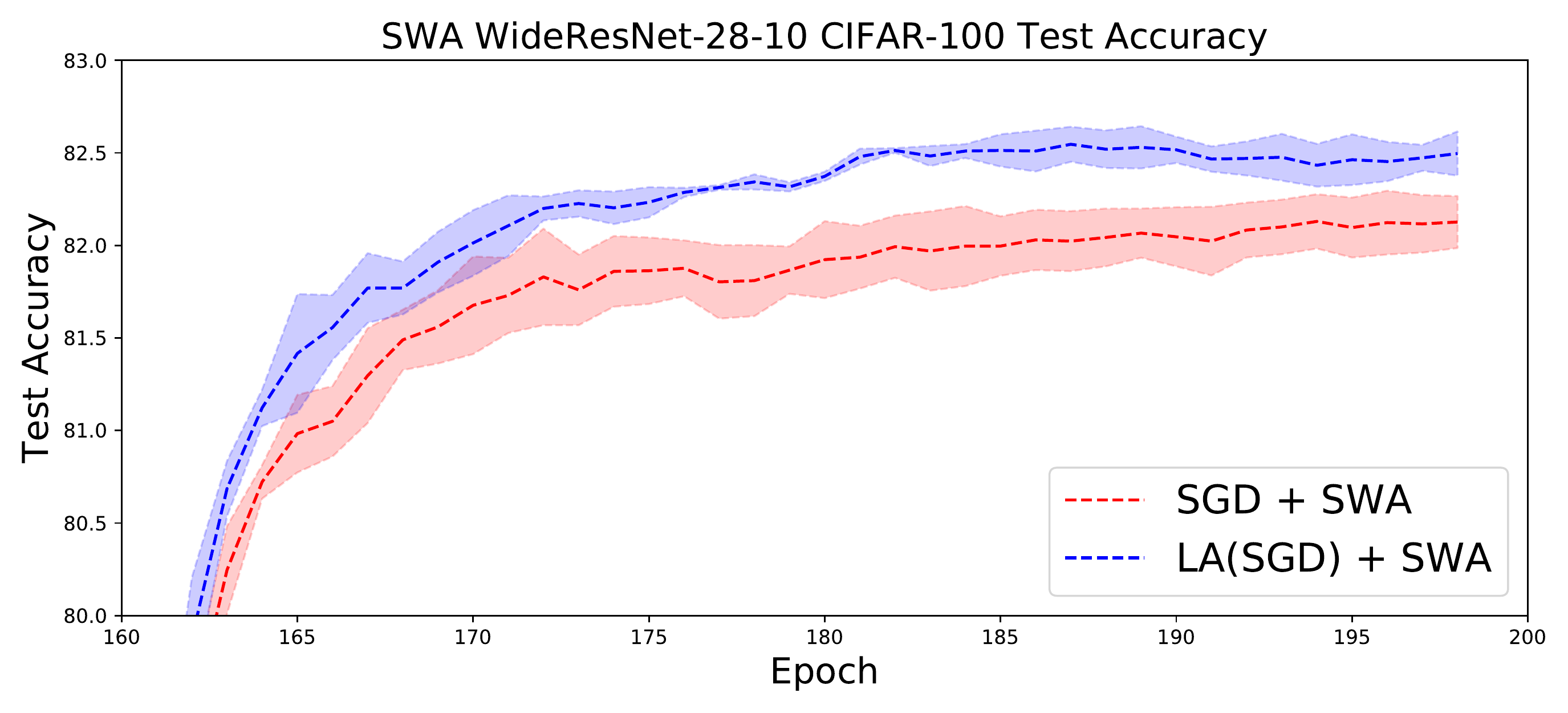}
    \end{minipage}
\centering
\caption{Test Accuracy on CIFAR-100 with SWA and Lookahead (Wide ResNet-28-10). Following Izmailov et al, SWA is started at epoch 161. We plot the accuracy throughout training (left) and the accuracy of the SWA network (right).}
\vskip -0.1in
\label{fig:swa-comparison-wrn}
\end{figure*}


\end{document}